\newtheorem{lemma}{Lemma}[section]
\newtheorem{remark}[lemma]{Remark}
\newtheorem{proposition}[lemma]{Proposition}
\newtheorem{corollary}[lemma]{Corollary}
\theoremstyle{remark}
\DeclareMathOperator*{\argmin}{argmin}
\DeclareMathOperator{\diag}{diag}
\newcommand{\real}{\mathbb{R}}
\newcommand{\float}{\mathbb{F}}
\newcommand{\dualp}[1]{\left\langle #1 \right\rangle} 
\newcommand{\domain}{\mathcal{D}}
\newcommand{\readOnly}{RO}
\newcommand{\inputClass}{\mathcal{T}_0}
\newcommand{\weight}{\omega}
\newcommand{\scale}{\gamma}
\newcommand{\thz}{t} 
\newcommand{\stopgrad}[1]{\backslash#1\backslash}
\newcommand{\tapeCurrent}{\mathcal{T}_\downarrow}
\newcommand{\tapeNext}{\mathcal{T}_\curvearrowright}
\newcommand{\headNext}{\mathcal{H}_\curvearrowright}
\newcommand{\headShift}{\mathcal{S}}
\begin{document}

\title{Universality of Gradient Descent Neural Network Training}

\author{G. Welper\footnote{Department of Mathematics, University of Central Florida, Orlando, FL 32816, USA, email \href{mailto:gerrit.welper@ucf.edu}{\texttt{gerrit.welper@ucf.edu}}. \newline 
This material is based upon work supported by the National Science Foundation under Grant No. 1912703.}}

\date{}
\maketitle

\begin{abstract}

  It has been observed that design choices of neural networks are often crucial for their successful optimization. In this article, we therefore discuss the question if it is always possible to redesign a neural network so that it trains well with gradient descent. This yields the following universality result: If, for a given network, there is any algorithm that can find good network weights for a classification task, then there exists an extension of this network that reproduces these weights and the corresponding forward output by mere gradient descent training. The construction is not intended for practical computations, but it provides some orientation on the possibilities of meta-learning and related approaches.

\end{abstract}

\smallskip
\noindent \textbf{Keywords:} deep neural networks, global minima, Turing machines, meta-learning, biologically plausible learning

\smallskip
\noindent \textbf{AMS subject classifications:} 68T07, 68Q04, 90C26

\section{Introduction}

Training neural networks with gradient descent is remarkably effective in a multitude of applications, see e.g. \cite{KrizhevskySutskeverHinton2012,HeZhangRenEtAl2016} for image classification, \cite{SilverHuangMaddisonEtAl2016} for reinforcement learning, \cite{SutskeverVinyalsLe2014,BahdanauChoBengio2015} for machine translation or \cite{GoodfellowBengioCourville2016} for a general overview. This is somewhat surprising because the objective function is generally non-convex and neural network training is $NP$-hard in the worst case \cite{BlumRivest1989}. 

The current literature contains a growing number of ideas and approaches to explain this phenomenon. Some experimental studies indicate that the loss function of common learning problems is more benign than one might assume on first sight \cite{GoodfellowVinyals2015,ZhangBengioHardtEtAl2017}. Other works \cite{ChoromanskaHenaffMathieuEtAl2015,LaurentBrecht2018,AroraCohenGolowichEtAl2019} provide a thorough understanding of simplified networks. For non-simplified general networks rigorous convergence results can be obtained under the assumption of over-parametrization \cite{SoudryCarmon2016,SafranShamir2018,LiLiang2018,Allen-ZhuLiSong2019,DuLeeLiEtAl2019}. Despite this progress, many practical networks do not satisfy all necessary assumptions and a solid understanding of the training behaviour remains a challenging question.

In this article, we address the problem from a different perspective. Instead of providing conditions on networks that guarantee convergence, we consider the question if we can modify a given network so that it trains well. This is loosely inspired by practical network training, where we do not consider one single fixed network either. More often, we experiment with a multitude of hyper-parameters and architectural elements such as drop-out, attention, skip connections, etc. until we obtain satisfactory results. The question is then not necessarily if every network trains well, but rather if we can find one that does.

To address this question, we provide the following universality result: For a given \emph{primary network} $f_\theta$ with weights $\theta$ and learning task, we assume that there is a Turing machine $TM$ that can compute good network weights $\bar{\theta} = TM(x,y)$ given inputs $x_i$ and labels $y_i$, $i=1, \dots, n$. This could be gradient descent training, global optimization methods or any other algorithm specifically tailored to the network and problem at hand. The existence of such a Turing machine merely asserts that some good training algorithms exits. We then construct an extended network that contains the primary network as a sub-network with the following three properties:
\begin{enumerate}
  \item Gradient descent training of the extended network adjusts the parameters $\theta$ of the primary network $f_\theta$ to the output $\bar{\theta} = TM(x,y)$ of the Turing machine.
  \item After gradient descent training, a forward execution of the extended network with input $x$ yields the output $f_{\bar{\theta}}(x)$, i.e. the output of the primary network $f_{\bar{\theta}}$ with weights $\bar{\theta} = TM(x,y)$ chosen by the Turing machine.
  \item The number of gradient descent steps matches the number of Turing machine steps.
\end{enumerate}
In summary, if there exists an algorithm that produces good weights from the learning data, these weights can be computed by gradient descent training on a properly extended network.

The extended network is carefully handcrafted, which, of course, is not intended as a practical algorithm, but rather an universality result to demonstrate the capabilities of gradient descent in combination with a properly chosen network. Form a practical perspective, there are multiple algorithms that aim at automatically generating good neural networks for a specific problem. Although these methods often do not optimize for gradient descent convergence directly, the extended network can be seen as a somewhat idealized potential outcome of such methods. One example is meta-learning \cite{HospedalesAntoniouMicaelliEtAl2020}, in particular methods such as MAML \cite{FinnAbbeelLevine2017} that pre-select network weights so that they can be adapted to a specific problem by one step of gradient descent. Another example is neural architecture search \cite{RealMooreSelleEtAl2017,ZophLe2017,ZophVasudevanShlensEtAl2018,WuDaiZhangEtAl2019}, which seeks to automatically generate neural networks with state of the art performance for a given learning task. In transfer learning \cite{DonahueJiaVinyalsEtAl2014,YosinskiCluneBengioEtAl2014} neural networks contain weights that have been pre-trained on related problems.

Another parallel can be drawn to biologically plausible learning methods \cite{LillicrapCowndenTweed2016,XiaoChenLiaoEtAl2019}. They often use dedicated ``feedback networks'' as a replacement for the back-propagation of gradients, as e.g. \cite{BellecScherrHajekEtAl2019}. Some examples include target propagation \cite{LeeZhangFischerEtAl2015} and synthetic gradients \cite{JaderbergCzarneckiOsinderoEtAl2017}. Although the extended networks of this article do use back-propagation,  a ``switch'' at the end of the network prevents gradients to be back-propagated directly to the primary network $f_\theta$. Instead, they are propagated into a parallel network where they trace the Turing machine computation and ultimately influence the weights of $f_\theta$. Therefore the network extension can also be understood as a ``feedback network''.

``Universality'' results for neural networks usually refer to their capacity to approximate arbitrary functions, with varying restrictions on network width, depth or activation functions. See \cite{Cybenko1989,HornikStinchcombeWhite1989,Barron1993} for some early results and e.g. \cite{Zhou2020} for convolutional networks, \cite{LuPuWangEtAl2017,HaninSellke2017} for deep networks and \cite{FinnLevine2017} for other learning regimes as model agnostic meta learning. More quantitative results are also available e.g. in \cite{DaubechiesDeVoreFoucartEtAl2019} and the references therein. In contrast to this body of work, the universality result in this article is concerned with a quite different question: We want to know if gradient descent training can always proceed to find desirable network weights, not if neural networks can approximate an arbitrary function.

There are also numerous connections between Neural networks and Turing Machines. Similar to the universal function approximation, neural networks can simulate arbitrary Turing Machines \cite{SiegelmannSontag1995}. Other results supplement neural networks with read and writable memory \cite{GravesWayneDanihelka2014,CollierBeel2018}, trainable by gradient descent. These latter neural Turing machines aim at making neural networks even more powerful in practical applications. Although they are a natural choice for the extended network of this article, for simplicity we confine ourselves to feed forward networks with sufficient width to hold the full tape during the computation of $TM(x,y)$ for all inputs $x$ and labels $y$ of fixed size.

The paper is organized as follows: Section \ref{sec:main} contains the main result of the paper and Section \ref{sec:extended-network-overview} contains a brief overview over the construction of the extended network. In Section \ref{sec:TM-via-gradient-descent}, we construct two loss functions that allow us to trace steps of Turing machines by gradient descent. Finally in Section \ref{sec:supervised-learning}, we prove the main result of the article.

\section{The Main Result}
\label{sec:main}

Let us consider the following standard learning task: Given samples $x_i \in \float^M$, $i=1, \dots, n$ and corresponding labels $y_i \in \float^m$, we want to train the parameters $\theta$ of a \emph{primary network} $f_\theta: \float^M \to \float^m$ so that $f_\theta(x)$ predicts labels for any input $x \in \float^M$. In order to avoid problems with Turing computability, we use some finite  precision floating point numbers $\float$ instead of real numbers $\real$. The structure of the parametric function $f_\theta$ is not important for the following considerations, although we are mainly interested in neural networks with input $x \in \float^M$ and network weights $\theta \in \float^W$. For notational convenience, we combine all $x_i$ and $y_i$ into two matrices $x \in \float^{n \times M}$ and $y \in \float^{n \times m}$.

We assume that there is a computable function $TM(x,y) \to \theta$ that for given inputs $x$ and labels $y$ of a learning task produces suitable weights $\theta$. For example, the function $TM$ could simply return the result of a standard gradient descent training, a global optimization method or any algorithm that is specialized for the problem at hand. Anyways, we do not consider the meaning of ``suitable'' or the choice of the algorithm any further. Instead, we show that whatever algorithm is chosen, its resulting weights $\theta$ can be reproduced by gradient descent training of an extended network. 

This \emph{extended network} is a parametric function $F$ with the following properties:
\begin{equation}
  \begin{aligned}
    & \text{$F: \real^M \times \domain \to \real^m$ for some parameter domain $\domain$}
    \\
    & \text{$F$ is composed of: $f_\theta$, $ReLU$, square root and product non-linearities,}\\
    & \quad\quad \text{stop gradient operations and quantization/de-quantization.}
  \end{aligned}
  \label{eq:intro:enlarged-network-properties}
\end{equation}
The stop gradient operations are used to provide a directionality for read/write operations of the Turing machine and are readily available in contemporary deep learning libraries. The quantization is never differentiated and used to transfer floating point inputs $\float$ to bit sequences for the Turing machine. Of course on any computer, internally any number is already in binary format, which can be passed directly to the Turing machine. Formally, one can also use a neural network to transfer floating point numbers to a bit sequence and vice versa as discussed in Appendix \ref{sec:quantization}.

The weights $(s, \thz) \in \domain$ of the extended network are constrained to the Cartesian product $\domain = S \times \real^{4\tau + n \times m}$ of a simplex $S$ and a vector space. The dimensions and contents will be determined later, at this point we merely need the simplex structure to define an appropriate gradient descent method. To this end, we use the least squares loss
\begin{equation}
  \ell(s, \thz) = \frac{1}{2} \|F(x,(s, \thz)) - y\|^2
  \label{eq:lsq-loss}
\end{equation}
on the original dataset $(x,y)$. Since the weight $s$ is constrained to a simplex, we use the conditional gradient, or Frank-Wolfe, algorithm 
\begin{equation}
  \begin{aligned}
    s_{k+1} & = s_k - [\argmin_{\sigma \in S} \partial_{\sigma - s_k} \ell(s_k, \thz_k) - s_k] \\
    \thz_{k+1} & = \thz_k - \alpha \odot \nabla_t \ell(s_k, \thz_k)\\
  \end{aligned}
  \label{eq:intro:gd-tm-supervised-learning}
\end{equation}
with some fixed learning rate $\alpha \in \real^{|\domain|}$ and component wise multiplication $\odot$.

In addition to the extended network, the construction of this article provides some explicit stopping criterion
\begin{equation}
  \ell(s_k, \thz_k) \le B_{stop}
  \label{eq:intro:gd-tm-supervised-stopping}
\end{equation}
for some $B_{stop}$ bigger than the final training error $\|f_{TM(x,y)}(x) - y)\| \le B_{stop}$. The following proposition is the main result of this article: We train the extended network with the given loss, gradient descent method, learning rate and stopping criterion. Afterwards, computing a forward pass of the extended network yields the same result as if we would run the primary network $f_\theta$ with weights $\theta = TM(x,y)$ chosen by the Turing machine.

\begin{proposition}
  \label{prop:learn-tm}
  For any number of samples $n$ and finite precision floating point numbers $\float$, let $TM: \float^{n \times M} \times \float^{n\times m} \to \float^p$ be a computable function given by a Turing machine that halts for every input, using less than $\tau = \tau(n)$ tape symbols at any step of the computation. Assume that the labels satisfy 
  \begin{equation}
    \|y\|_2^2 \ge \epsilon
    \label{eq:labels-size}
  \end{equation} 
  for some $\epsilon > 0$. Then, there is an extended network $F$ that satisfies all properties in \eqref{eq:intro:enlarged-network-properties}, with initial values $(s_0, \thz_0)$, fixed learning rate $\alpha$ and stopping threshold $B_{stop}$, depending only on $TM$ such that after gradient descent training \eqref{eq:intro:gd-tm-supervised-learning} with stopping criterion \eqref{eq:intro:gd-tm-supervised-stopping} we have
  \[
    F(x,(s_K, \thz_K)) = f_{TM(x,y)}(x)
  \]
  for the final parameter $(s_K, \thz_K) \in \domain$ of the gradient descent method. The gradient descent stopping criterion is met in $k_t+1$ steps, where $k_t$ is the number of steps the Turing machine's uses to compute $TM(x,y)$.

\end{proposition}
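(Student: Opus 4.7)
The plan is to build $F$ as a carefully orchestrated composition whose forward pass carries two parallel computations: (i) an emulation of one step of the Turing machine $TM$ encoded inside the parameter $(s,t)$, and (ii) an evaluation of the primary network $f_\theta$ with the weights $\theta$ that sit on the current tape. The simplex coordinate $s\in S$ will encode the finite state of $TM$ together with the head's position on a length-$\tau$ tape (hence the simplex dimension is roughly $|Q|\cdot\tau$), while the vector coordinate $t\in\real^{4\tau+n\times m}$ holds the tape contents, some auxiliary counters, and a dedicated slot for the labels $y$. The inputs $x$ and $y$ are fed into $F$ through the sample/label path of the loss \eqref{eq:lsq-loss}, quantized into bits, and placed on the tape via the construction from Appendix \ref{sec:quantization}; a portion of $t$ is initialized so that the tape at step $0$ is the encoding of $(x,y)$ and the weights $\theta_0$ sit at a designated region.

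With this encoding in place, I would invoke the two TM-tracing losses constructed in Section \ref{sec:TM-via-gradient-descent}: one for the Frank--Wolfe update of the simplex coordinate $s$ (which captures the discrete state/head-position transition) and one for the componentwise update of the tape coordinate $t$. Their key property is that taking one conditional-gradient step on $s$ and one gradient step on $t$ with the prescribed learning rate $\alpha$ implements exactly one transition of $TM$. I would then show that $F$ can be written, using the allowed building blocks (ReLU, square root, product nonlinearities, stop-gradient, and the primary network $f_\theta$), so that the least-squares loss \eqref{eq:lsq-loss} on $(x,y)$ restricts, during the TM-simulation phase, to precisely a sum of these two constructed losses. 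Here the $\text{stop-gradient}$ operations are essential: they enforce the correct "read vs.\ write" directionality of the tape update and, crucially, block gradients from flowing into the primary-network weight slot of $t$ via $f_\theta$, so that the only way $\theta$ can change is through the TM simulation itself.

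The output of $F$ is produced by a switch that multiplies $f_\theta(x)$ by a gate built from $s$ and a "halted" counter; the gate equals $0$ for all $k<k_t$ and equals $1$ at step $k=k_t$, when the halting state is reached. Hence, during the first $k_t$ steps the forward output is identically $0$, the loss equals $\tfrac{1}{2}\|y\|^2\ge \epsilon/2$, and \eqref{eq:intro:gd-tm-supervised-stopping} is not triggered (set $B_{stop}$ between the final training error $\tfrac{1}{2}\|f_{TM(x,y)}(x)-y\|^2$ and $\epsilon/2$, which is possible as long as the former is below $\epsilon/2$; otherwise amplify $f_\theta(x)$ by an innocuous positive scalar so the separation holds). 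At step $k_t+1$ the halting gate switches on, the output becomes $f_{\theta_{k_t}}(x)=f_{TM(x,y)}(x)$, the loss drops below $B_{stop}$, and the procedure halts. The step count $k_t+1$ therefore matches the number of TM transitions plus one final "readout" step.

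The main obstacle I expect is the third-paragraph point: wiring the switch so that (a) before halting, $\partial F/\partial(s,t)$ decomposes cleanly into the gradients of the two TM-tracing losses of Section \ref{sec:TM-via-gradient-descent}, with no cross-terms coming from the suppressed $f_\theta(x)$ branch, and (b) after halting, the same network still produces $f_{TM(x,y)}(x)$ on the forward pass. This has to be done using only ReLU, square root, product, and stop-gradient, and the square root in particular demands care to ensure the relevant arguments stay nonnegative along the entire trajectory $(s_k,t_k)$. Once the gate and the stop-gradient barriers are in place and the TM-tracing losses are plugged in, the proof reduces to an induction on $k\le k_t$ showing that $(s_k,t_k)$ encodes the configuration of $TM$ after $k$ steps, followed by a direct forward-pass computation at step $k_t+1$.
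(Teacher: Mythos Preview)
Your high-level architecture --- a TM-tracing loss driving the simplex/tape updates, a gate that eventually reveals $f_\theta(x)$, and stop-gradient barriers isolating the primary network --- matches the paper's strategy. But two concrete mechanisms in your sketch do not work as written, and they are precisely the places where the paper invests the most care.

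\medskip
\textbf{Getting the labels onto the tape.} You write that ``a portion of $t$ is initialized so that the tape at step $0$ is the encoding of $(x,y)$.'' This is not allowed: by the statement, the initial values $(s_0,t_0)$ depend only on $TM$, not on the data, and $y$ is never an argument of $F$ --- it enters only through the least-squares loss $\tfrac12\|F(x,\cdot)-y\|^2$. So there is no way to \emph{initialize} the tape with $y$. The paper spends a full subsection on this: a dedicated weight $z$ with $z_0=0$ and a switch $s_{init}$ arranged so that, at step $0$, $out=z$ and $\nabla_z out=I$ while all other gradients vanish; one gradient step with learning rate $1$ then writes $z_1=y$ into the weights. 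Only after this ``label-reading'' step is $y$ available inside the network (and the switch flips so $z$ is frozen thereafter). Your proposal needs some analogous mechanism.

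\medskip
\textbf{Making the least-squares loss reduce to $\ell_{TM}$.} You assert both that during simulation ``the forward output is identically $0$, the loss equals $\tfrac12\|y\|^2$'' \emph{and} that ``the least-squares loss \ldots restricts \ldots to precisely a sum of these two constructed losses.'' These are incompatible: if $out\equiv 0$ as a function of $(s,t)$ then $\nabla_{(s,t)}\tfrac12\|out-y\|^2=0$, and the gate you describe (a multiplier on $f_\theta(x)$) produces exactly this situation. Even if the gate merely \emph{evaluates} to $0$ along the trajectory but has nonzero gradient, the loss gradient is $-y^T\nabla out$, which depends on the unknown $y$ and will not in general equal $\nabla\ell_{TM}$. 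The paper's fix is to make the TM branch output not zero but $out=\sqrt{2\ell_{TM}}\,f_\perp(z)$, where $f_\perp(z)$ is a unit vector orthogonal to $z=y$. Then $\|out-y\|^2=2\ell_{TM}+\|y\|^2$, so the loss gradient is exactly $\nabla\ell_{TM}$ with the $y$-dependence cancelled by orthogonality. This is also what justifies the square-root nonlinearity in \eqref{eq:intro:enlarged-network-properties} and what makes $\|out\|^2=2\ell_{TM}$ the quantity that triggers the second switch.

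\medskip
Once these two pieces are in place (label-reading step, orthogonal output), your induction and readout argument go through essentially as in the paper; the stopping-criterion calibration then uses the additive/scale constants inside $\ell_{TM}$ rather than any rescaling of $f_\theta$.
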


An overview over the construction is provided in Section \ref{sec:extended-network-overview}. The proof of the proposition, as well as the following two corollaries is deferred to Sections \ref{sec:TM-via-gradient-descent} and \ref{sec:supervised-learning}. Bounds on the size of the extended network depend on its construction. First, if we insist that the gradient descent loss is strictly decreasing, we obtain the following.

\begin{corollary}
  \label{cor:learn-tm-tape-internal}
  Assume the Turing machine $TM$ has two tape symbols, $d \ge 2$ tapes and $|Q|$ states in its finite control. Then, the extended network in Proposition \ref{prop:learn-tm} can be chosen with non-increasing gradient descent loss $\ell(s_k, \thz_k)$ and less than $12$ extra layers of width smaller than $|Q| 2^{d\tau} \tau^d + \mathcal{O}(nm)$ in addition to $f_\theta$ .
\end{corollary}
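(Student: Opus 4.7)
My plan is to invoke Proposition~\ref{prop:learn-tm} with a concrete and deliberately wasteful representation of the Turing machine state inside the extended network, and then to count. Since the construction supporting the proposition (detailed in Sections~\ref{sec:TM-via-gradient-descent} and~\ref{sec:supervised-learning}) leaves freedom in how configurations of $TM$ are embedded, the first step is to encode each configuration by a one-hot indicator over the full configuration space. A configuration is specified by a control state (from $|Q|$ possibilities), the joint contents of the $d$ binary tapes of length $\tau$ ($2^{d\tau}$ possibilities), and the positions of the $d$ read/write heads ($\tau^d$ possibilities). The associated indicator vector has dimension $|Q|\,2^{d\tau}\,\tau^d$, which matches the dominant term of the claimed width bound, and I will view this vector as a point of the simplex $S$ whose vertices are exactly the configurations of $TM$.

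Next, I would verify that the extended network factors into at most $12$ layers beyond $f_\theta$. Under the one-hot encoding the transition function of $TM$ is a fixed permutation of vertices, so one computation step is realized by a single affine-then-$ReLU$ block. Adding the modules that the construction requires anyway -- quantization of $(x,y)$ and de-quantization of the output, the stop-gradient switch separating the primary forward pass of $f_\theta$ from the back-propagated gradient path, the extraction of $\theta = TM(x,y)$ from the halting configuration, and the forward call $f_\theta(x)$ compared against $y$ to form \eqref{eq:lsq-loss} -- contributes only a small constant, and a careful accounting yields at most $12$ layers in total. The additive $\mathcal{O}(nm)$ width is spent carrying the $n \times m$ label matrix through the branch that evaluates the loss, which is unavoidable since the network output in $\float^m$ must be compared simultaneously against all $n$ samples.

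The hard part is enforcing the non-increasing loss property, since Proposition~\ref{prop:learn-tm} only guarantees eventual convergence and does not a priori preclude transient increases. Here the one-hot encoding pays off: a Frank--Wolfe step on $s$ in \eqref{eq:intro:gd-tm-supervised-learning} moves mass from the vertex of $S$ representing the current configuration to the vertex representing its successor under $TM$, i.e.\ it traverses a single edge of the simplex, and all other coordinates of $s$ are inactive during that step. My plan is then to decompose $\ell$ into a sum of per-step residuals, align each residual with a distinct edge of $S$, and use the stop-gradient operations to freeze the residuals of already-completed steps while the current residual decreases monotonically to zero. Verifying this telescoping structure against the actual losses constructed in Section~\ref{sec:TM-via-gradient-descent} -- and in particular checking that the learning rate $\alpha$ in \eqref{eq:intro:gd-tm-supervised-learning} can be chosen so that the $\thz$-update does not overshoot -- is where the bulk of the remaining work lies and is the main obstacle I anticipate.
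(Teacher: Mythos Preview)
Your dimension count and simplex encoding are exactly right, and the layer accounting is in the right spirit. The disagreement is in how monotonicity is obtained, which you flag as the hard part.

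The paper's mechanism is much simpler than your per-edge residual scheme with stop-gradients. Because each vertex $v$ of $S$ encodes a \emph{full} configuration of $TM$ (control state, all tape contents, all head positions), and $TM$ halts on every input in $\inputClass$, one can attach to each vertex the integer $k(v)$ equal to the number of steps from $v$ to a halting configuration (set $k(v)=K$ if none is reached within $K$ steps). Fix any strictly increasing sequence $W_0 < W_1 < \cdots < W_K = B$ and set the vertex weight $\weight_v = W_{k(v)}$, with midpoint weights $\weight_{vw} = \tfrac{1}{2}(W_{k(v)}+W_{k(w)})$ on computational edges and $\weight_{vw}=B$ otherwise. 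This satisfies \eqref{eq:graph-edge-weights} and the extra condition \eqref{eq:graph-loss-extra}, so Proposition~\ref{prop:tm-descent-no-tape-variable} already yields that conditional gradient descent traces the computation with strictly decreasing loss $\ell_S(x_k) = W_{k(v_k)}$. No telescoping, no freezing of completed residuals, and no stop-gradients are needed for monotonicity: it is encoded globally via the potential $k(v)$. Your decomposition into per-step residuals aligned with individual edges could perhaps be made to work, but it is substantially more intricate and not what the paper does.

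Your concern about the $\thz$-update overshooting is also misplaced for this corollary. In the tape-internal construction the tapes and heads are absorbed into the simplex coordinate $s$, so $T,H\in\real^0$ are trivial and the only remaining unconstrained weight is $z$, which is updated exactly once (to copy $y$) and then effectively frozen by the switch $s_{init}$. The delicate learning-rate balancing and the possibility of non-monotone loss that you anticipate belong to Corollary~\ref{cor:learn-tm-tape-external}, not to this one.
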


This result is included in the paper because it contains the basic ideas to trace a Turing machine by gradient descent without adding too much technicalities for read/write operations to the tapes. However, the result itself is questionable because the extra layers are large enough to encode all possible states of the Turing machine for the given input sizes. This includes states with all possible inputs and outputs written in the tapes and therefore one could just as well store all possible input/output relations (these are finite because we work with floating point numbers $\float$), which is obviously infeasible.

The following corollary provides a similar result with a much smaller extended network, which scales linearly in the required number of tape symbols $\tau$ and therefore has a comparable size to the Turing machine itself. Unlike the last corollary, the loss function is no longer monotonically decreasing with regard to the gradient descent steps and does not work with a line search.

\begin{corollary}
  \label{cor:learn-tm-tape-external}
  Assume the Turing machine $TM$ has two tape symbols, $d \ge 2$ tapes and $|Q|$ states in its finite control. If the gradient descent loss $\ell(s_k, \thz_k)$ is allowed to be non-monotonic, the size of the extended network $F$ in Proposition \ref{prop:learn-tm} can be constrained to less than $12$ extra layers of width smaller than $2^d |Q| + 2 d \tau + \mathcal{O}(nm)$,  in addition to $f_\theta$ .
\end{corollary}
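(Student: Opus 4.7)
The plan is to specialize the construction underlying Proposition~\ref{prop:learn-tm} so that the tape is stored in the vector parameter $\thz$ rather than combinatorially encoded in the simplex $s$: this is the ``external tape'' variant of the two loss functions announced in Section~\ref{sec:TM-via-gradient-descent}, and it is precisely what drops the $\tau$-dependence from exponential to linear. I would split the parameters so that $s$ ranges over the simplex on $2^d|Q|$ vertices, one per pair (finite-control state, $d$-tuple of symbols currently under the heads); the tape contents and head positions then become $2d\tau$ indicator coordinates of $\thz$, alongside a cached copy of $y$ that accounts for the $\mathcal{O}(nm)$ overhead and is needed to keep the squared-error loss structured.

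Each gradient-descent step then simulates one Turing-machine step as follows. A read layer forms inner products between tape indicators and head indicators in $\thz$, wrapped in a stop-gradient on the tape so that reads do not perturb the stored symbols. A small constant-depth sub-network encodes the fixed transition table, of size $2^d|Q|$, as a linear map composed with $ReLU$; it outputs the new state, the new symbols to be written, and the head shifts. The Frank-Wolfe step on $s$ slides the simplex mass from the active vertex to the new (state, symbol-tuple) vertex exactly as in Section~\ref{sec:TM-via-gradient-descent}. Simultaneously, the loss is designed so that its $\thz$-gradient, scaled coordinate-wise by the pre-tuned learning rate $\alpha$, flips precisely the head-indexed tape cells to their new values and shifts the head indicators by $\pm 1$; here the stop-gradient is oriented in the opposite direction so that the $s$-update and the $\thz$-update do not interfere. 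Once a halting vertex appears in $s$, a gating sub-network reads the designated slice of the tape as weights $\theta = TM(x,y)$, feeds them into $f_\theta$, and emits $f_{TM(x,y)}(x)$; assumption~\eqref{eq:labels-size} keeps the loss bounded above $B_{stop}$ during the simulation phase, so that the stopping criterion fires only when $f_\theta$ is actually invoked. The layer budget — read, decode, update $s$, write, shift, halt-detect, invoke $f_\theta$, output — stays within the claimed twelve, and each width contribution matches the stated bound.

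The main obstacle is recovering correctness after giving up monotonicity. In Corollary~\ref{cor:learn-tm-tape-internal} the entire configuration sits at a vertex of one large simplex and each Frank-Wolfe step walks monotonically to a better vertex, so a line search is compatible; once the tape is moved into unconstrained $\thz$ coordinates, writing a cell may transit through indicator values at which the squared residual is temporarily larger than before the step, so the loss is no longer decreasing and a line search would actively fight the simulation. One therefore has to commit to a fixed, coordinate-scaled $\alpha$ depending only on $TM$ and prove that a single step under this rate performs a clean flip-and-shift of exactly the head-adjacent cells while leaving the remaining tape untouched. Verifying this — that the read- and write-phase stop-gradients decouple the two information flows, and that $\alpha$ can be chosen so the indicator coordinates land exactly on their intended new values after one step rather than drifting — is the technical heart of the argument and is where I expect the detailed machinery from Section~\ref{sec:TM-via-gradient-descent} to be invoked.
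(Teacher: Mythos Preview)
Your high-level plan matches the paper's: the simplex variable $s$ indexes pairs (control state, symbol $d$-tuple under the heads), the tape and head indicators live in $\thz$ as $2d\tau$ real coordinates, stop-gradients direct read versus write, a fixed coordinate-scaled learning rate replaces line search, and the loss is allowed to oscillate. This is exactly Proposition~\ref{prop:tm-descent-tape-excluded} combined with option~2 of Section~\ref{sec:train-TM}.

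Two points need sharpening. First, you do not explain how the supervised least-squares loss $\tfrac{1}{2}\|F-y\|^2$ acquires the same gradient as the handcrafted Turing-machine loss $\ell_{TM}$; saying ``the loss is designed so that its $\thz$-gradient \dots'' skips the actual mechanism. The paper's device is to set $f_{TM}=\sqrt{2\ell_{TM}}\,f_\perp(z)$ with $f_\perp(z)$ a unit vector orthogonal to the cached labels $z=y$, which yields $(out-y)^T\nabla_\Box f_{TM}=\nabla_\Box\ell_{TM}$ identically. Without this (or an equivalent) trick, there is no reason the squared-error gradient should perform the intended flip-and-shift, and the $\mathcal{O}(nm)$ cached copy of $y$ is not merely bookkeeping but is what makes the orthogonality argument work. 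Second, assumption~\eqref{eq:labels-size} is not what keeps the loss above $B_{stop}$ during the simulation phase; it is used only to keep the switch $s_{init}$ latched after the first step has copied $y$ into $z$. The simulation-phase loss bounds come instead from the explicit estimates~\eqref{eq:tm-descent-tape-excluded-stopping} together with the choice of the constants $b$, $c$, $\scale$ in the loss~\eqref{eq:tm-loss-tape-external}.
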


This construction contains two network weights $T$ and $H$ as components of $\thz$ for the Turing machine's tape and head position. In principle, we can allow these to be of infinite dimension leading to infinite tape length $\tau = \infty$ as for regular Turing machines.

\begin{remark}
  The extra width $\mathcal{O}(nm)$ is required to read the full set of labels into the Turing machine $TM$. Many practical algorithms work with batches of labels, which could in principle also be implemented with a similar extended network. In this case, we would only need an extra width of $\mathcal{O}( [\text{batch size}] \cdot m)$.
\end{remark}

\section{Extended Network: Overview}
\label{sec:extended-network-overview}

This section provides a brief overview over the construction of the extended network. A detailed construction and the proof of the main results is given in Sections \ref{sec:TM-via-gradient-descent} and \ref{sec:supervised-learning} below.

The construction proceeds in two steps: First, we construct a loss function $\ell_{TM}$ that allows us to trace the states of a generic Turing machine by gradient descent steps. Unlike typical loss functions, it does not compare predictions to labeled training data. Instead, it is build from standard neural network components and therefore, in the second step, used as a sub-network together with the primary network and some ``switches'' to build the extended network.

The construction of $\ell_{TM}$ is given in Section \ref{sec:TM-via-gradient-descent}. To summarize, we associate each state of the Turing machine with a vertex of a simplex. Depending on the construction in Corollaries \ref{cor:learn-tm-tape-internal} or \ref{cor:learn-tm-tape-external} the state includes the tape or only the finite control and the tape symbols at the head position with secondary tape and head variables. The loss function maps the simplex to the real numbers and is chosen so that gradient descent steps with unit learning rate remain on vertices if started at an initial vertex. Therefore, the gradient descent steps can be associated with states of the Turing machine, which we referred to as ``tracing'' the Turing machine, above.

In order to define $\ell_{TM}$, we subdivide the simplex into corner simplices, one for each vertex, and the remaining interior part. On each corner we can freely assign a piecewise linear loss function and then extend it continuously to the full simplex. Since the gradient descent steps remain on the vertices, this allows us to control the gradients and carve ``barriers'' and ``slides'' into $\ell_{TM}$ that guides the gradient descent updates along the states of the Turing machine execution. If the states do not contain the full tape, some extra least squares terms are added for reading and writing to a secondary tape variable.

The extended network is composed of the primary network $f_\theta$ and the Turing machine network $\ell_{TM}$ as shown in Figure \ref{fig:nn-extended}. For the time being, we only consider a rudimentary overview and defer a more detailed description to Section \ref{sec:extended-network-construction} below.

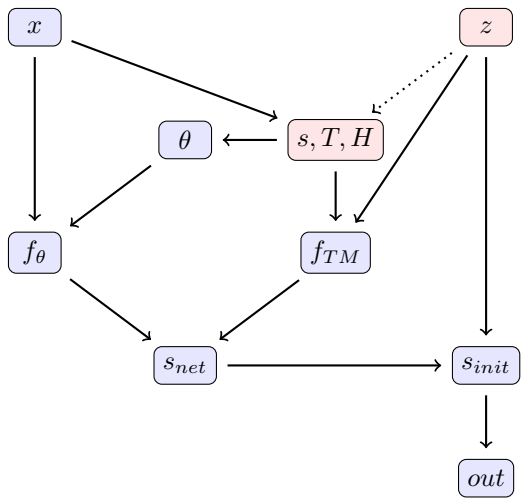
\begin{figure}
  \begin{center}
    \begin{tikzpicture}[
      in_out/.style={rectangle,draw,rounded corners=3pt,minimum width=0.5cm,minimum height=0.5cm,fill=red!10},
      layer/.style={rectangle,draw,rounded corners=3pt,minimum width=0.7cm,minimum height=0.5cm,fill=blue!10},
      arrow/.style={thick,->,shorten >= 4pt,shorten <= 4pt},
      dotarrow/.style={thick,dotted,->,shorten >= 4pt,shorten <= 4pt},
    ]
  
      \newcommand{\dx}{2}
      \newcommand{\dy}{1.5}
  
      \node[layer]  (x) at (0*\dx,4*\dy) {$x$};
      \node[layer,fill=red!10]  (z) at (3*\dx,4*\dy) {$z$};

      \node[layer]  (nnWeights) at (1*\dx,3*\dy) {$\theta$};
      \node[layer,fill=red!10]  (s) at (2*\dx,3*\dy) {$s,T,H$};
  
      \node[layer]  (nn) at (0*\dx,2*\dy) {$f_\theta$};
      \node[layer]  (tm) at (2*\dx,2*\dy) {$f_{TM}$};
  
      \node[layer]  (switchNetwork) at (1*\dx,1*\dy) {$s_{net}$};
      \node[layer]  (switchInit) at (3*\dx,1*\dy) {$s_{init}$};
  
      \node[layer]  (out) at (3*\dx,0*\dy) {$out$};
  
      \draw[arrow] (x) -- (nn);
      \draw[arrow] (x) -- (s);
      \draw[dotarrow] (z) -- (s);
      \draw[arrow] (z) -- (tm);
      \draw[arrow] (z) -- (switchInit);

      \draw[arrow] (s) -- (tm);
      \draw[arrow] (s) -- (nnWeights);
      \draw[arrow] (nnWeights) -- (nn);
  
      \draw[arrow] (nn) -- (switchNetwork);
      \draw[arrow] (tm) -- (switchNetwork);
  
      \draw[arrow] (switchNetwork) -- (switchInit);
  
      \draw[arrow] (switchInit) -- (out);
      
    \end{tikzpicture}
  \end{center}
  \caption{Extended neural network for gradient descent training}
  \label{fig:nn-extended}
\end{figure}

The bottom two layers $s_{net}$ and $s_{init}$ contain two switches, which select the branches of the network that are passed to the output. The first step of gradient descent training writes a copy of the labels $y$ into the weights/node $z$. This copy is used as input for the Turing machine $TM$ because the actual labels $y$ are only implicitly available through the loss function.

The first training step also flips the switches so that only the $f_{TM}$ branch is passed to the output and the remaining gradient descent steps trace the Turing machine as described above. $f_{TM}$ is a wrapper around $\ell_{TM}$ that matches the output dimensions from the scalar loss to the dimension of the labels $y$.

Finally, once the Turing machine reaches a halting state, the last gradient descent step flips the switches again so that only the primary network is passed to the output. The parameters $\theta$ are no longer network weights but read from the weights $s,T,H$ representing the Turing machine's state and tape, containing $TM(x,y)$ after halting. Therefore, any further forward pass through the network computes the result $f_{TM(x,y)}$ of the primary network with weights computed by the Turing machine.

Note that during the entire training all gradients are passed only through the subnetwork $f_{TM}$, but never through the primary network $f_\theta$. This is reminiscent of approaches in biologically plausible learning, which use separate ``feedback networks'' for the adjustment of network weights. Unlike other approaches in the literature, this effect is solely generated by the network architecture, not by a modification of the back-propagation algorithm.

\section{Simulate Turing Machines via Gradient Descent}
\label{sec:TM-via-gradient-descent}

\subsection{Turing Machines}
\label{sec:turing-machines}

Let us first fix the Turing machine TM. Without loss of generality, we assume that it has $d$ tapes and only two tape symbols $\{-1,1\}$, including the blank.
\begin{enumerate}
  \item $Q$: finite and non-empty set of states.
  \item $q_0 \in Q$: Initial state.
  \item $F \subset Q$: accepting states.
  \item Tape symbols: $\Gamma = \{-1,1\}$.
  \item $\delta: Q \times \Gamma^d \to Q \times \Gamma^d \times \{-1,1\}^d$: transition function with $-1$ and $1$ denoting left and right shift of the head.
\end{enumerate}
We denote the component functions of $\delta$ by $\delta_i(q,t)$ so that $\delta(q,t) = [\delta_1(q,t), \delta_2(q,t), \delta_3(q,t)]$. 

We make two more assumptions on the Turing machine. First a subset $\readOnly \subset \{1, \dots, d\}$ of the tapes is read-only. This will usually be the tape holding the inputs, whereas the outputs are placed on the remaining tapes. Second, for a class $\inputClass$ of well formed inputs of interest, we assume that the Turing machine halts in finite time and uses at most $\tau$ consecutive tape entries on each tape at any time during the computation. As for any real computer, we then assume that the tapes have finite length $\tau$. 

Typically, the class $\inputClass$ consists of encodings of the training data $x$ and $y$ and the tape size $\tau$ depends on the data size. For the construction in Corollary \ref{cor:learn-tm-tape-external}, it is allowed to be infinite.

\subsection{Technical preliminaries}

The loss function $\ell_{TM}$ to trace the Turing machine $TM$ with gradient descent is constructed with the help of piecewise linear functions on simplices. This sections contains some explicit formulas for their evaluation, derivatives and optimal descent directions. To this end, let $S$ be a simplex with vertices $e_v$, $v \in V$ for some index set $V$ and assume that $\ell: S \to \real$ is affine. The vertices $e_v$ will be standard basis vectors below, but this is not necessary for this section. Then, for $x \in S$ with barycentric coordinates
\[
  \begin{aligned}
    x & = \sum_{v \in V} x_v e_v, &
    \sum_{v \in V} x_v & = 1, &
    x_v & \ge 0
  \end{aligned}
\]
we can evaluate $\ell$ by 
\[
  \ell(x) = \sum_{v \in V} x_v \ell(e_v).
\]
Moreover, affinity implies that for any convex combination we have 
\begin{equation*}
  \ell \big((1-\lambda)x + \lambda y \big) 
  = (1-\lambda) \ell(x) + \lambda \ell(y).
\end{equation*}
Therefore, for any two points $x,y$ in the simplex $S$, the one-sided directional derivative in the direction $y-x$ is given by
\begin{equation}
  \begin{aligned}
    \partial_{y-x} \ell(x) & = \lim_{\substack{h \to 0 \\ h \ge 0}} \frac{1}{h} \left[ \ell(x + h(y-x)) - \ell(x) \right]
    = \lim_{\substack{h \to 0 \\ h \ge 0}} \frac{1}{h} \left[ \ell( (1-h)x + hy) - \ell(x) \right]
    \\
    & = \lim_{\substack{h \to 0 \\ h \ge 0}} \frac{1}{h} \left[ (1-h)\ell(x) + h\ell(y)) - \ell(x) \right]
    \\
    & = \ell(y) - \ell(x).
  \end{aligned}
  \label{eq:directional-derivative}
\end{equation}
For gradient descent on simplices, we optimize over directions that point into the simplex. However, the loss function will not be affine on the entire simplex, but only in its corners. To this end, for $0 < \mu \le 1$, the \emph{corner} at $v$ is a sub-simplex defined by 
\begin{align*}
  S_v^\mu & := \operatorname{conv} \Big( \{e_v\} \cup \{e_{vw}^\mu | \, v \ne w \in V \} \Big), &
  e_{vw}^\mu := (1-\mu) e_v + \mu e_w,
\end{align*}
with ``$\operatorname{conv}$'' denoting the convex hull. In the following, we denote the vertices of a simplex $S$ by $V(S)$. If $\mu=1/2$, we also use the abbreviations
\begin{align*}
  S_v & := S_v^{1/2}, & e_{vw} & := e_{vw}^{1/2}.
\end{align*}
The next lemma provides explicit formulas for optimal descent directions with eventual extra quadratic terms.

\begin{lemma} 
  \label{lemma:simplex-derivative}
  Let $e_v$ be a vertex of $S \subset \real^N$ and $\ell$ be a loss function that is affine on the corner $S_v^\mu$ for some $0 < \mu \le 1$. Then for $A \in \real^{M \times N}$ and $y \in \real^M$, we have
  \begin{equation*}
    \argmin_{z \in S} \left\{ \partial_{z-e_v} \left[ \ell(x) + \frac{1}{2} \|Ax-y\|^2 \right]_{x = e_v} \right\} - e_v = \frac{1}{\mu} (e_w^\mu-e_v) = e_w - e_v,
  \end{equation*}
  with 
  \begin{align}
    e_w^\mu  & = \argmin_{b_u \in V(S_v^\mu)} \left[ \ell(b_u) + (Ae_v-y)^T A b_u \right],
    \label{eq:lemma:simplex-derivative-1}
    \\
    e_w & = \argmin_{e_u \in V(S)} \left[ \ell( [1-\mu] e_v + \mu e_u) + \mu (Ae_v-y)^T A e_u \right].
    \label{eq:lemma:simplex-derivative-2}
  \end{align}
\end{lemma}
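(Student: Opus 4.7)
The plan is to split the integrand into its piecewise linear piece $\ell$ and the smooth quadratic $g(x) := \frac{1}{2}\|Ax-y\|^2$, and compute the one-sided directional derivatives at $e_v$ separately. For $g$, the chain rule immediately gives $\partial_{z-e_v} g(e_v) = (Ae_v - y)^T A(z-e_v)$, which is already linear in $z$. For $\ell$, the identity \eqref{eq:directional-derivative} cannot be applied directly on the full segment from $e_v$ to $z$, since $\ell$ is only assumed affine on the corner $S_v^\mu$, not on all of $S$. The real work is therefore to localise the derivative of $\ell$ inside that corner.

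To do this I would first verify the characterisation $S_v^\mu = \{x \in S : x_v \ge 1 - \mu\}$ by writing any point of $S_v^\mu$ in the barycentric coordinates of $S$. For $z \in S$, the $e_v$-barycentric coordinate of $(1-h) e_v + h z$ is $1 - h(1 - z_v)$, so the path leaves $e_v$ inside $S_v^\mu$ for all sufficiently small $h > 0$, and $\ell$ is affine along it near $e_v$. Next I would verify the identity $(1-\mu) e_v + \mu z = z_v e_v + \sum_{u \neq v} z_u e_{vu}^\mu$ by expanding the right-hand side and using $\sum_{u \neq v} z_u = 1 - z_v$; its coefficients are nonnegative and sum to one, so it exhibits $(1-\mu) e_v + \mu z$ as a convex combination of the vertices of $S_v^\mu$. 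Affinity of $\ell$ on $S_v^\mu$ combined with the computation in \eqref{eq:directional-derivative} applied inside the corner then yields $\partial_{z - e_v}\ell(e_v) = \tfrac{1}{\mu}[\ell((1-\mu) e_v + \mu z) - \ell(e_v)] = \tfrac{1}{\mu}\sum_{u \neq v} z_u[\ell(e_{vu}^\mu) - \ell(e_v)]$, a linear function of $z$.

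Adding the two pieces expresses $\partial_{z-e_v}[\ell + g](e_v)$ as an affine function of $z \in S$, whose minimum over the simplex is attained at some vertex $e_w \in V(S)$. Collecting the per-vertex objective as $\tfrac{1}{\mu}[\ell(e_{vw}^\mu) - \ell(e_v)] + (Ae_v - y)^T A(e_w - e_v)$ and dropping the additive terms independent of $w$ yields \eqref{eq:lemma:simplex-derivative-2}; the substitution $b_w = e_{vw}^\mu = (1-\mu) e_v + \mu e_w$ (with the convention $e_{vv}^\mu := e_v$) transports the same minimisation onto $V(S_v^\mu)$ and gives \eqref{eq:lemma:simplex-derivative-1}. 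The final identity $\tfrac{1}{\mu}(e_w^\mu - e_v) = e_w - e_v$ is immediate from the definition of $e_w^\mu$. The one step I expect to require care is the derivative computation: one must argue that the step from $e_v$ stays in the corner where $\ell$ is affine, and that the scaled point $(1-\mu) e_v + \mu z$ decomposes cleanly over the corner vertices via the identity above. Once those two observations are in place, the remainder is elementary linear programming on a simplex plus a bookkeeping change of variables.
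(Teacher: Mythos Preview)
Your proposal is correct and follows essentially the same approach as the paper: rescale the direction so that the relevant point $(1-\mu)e_v+\mu z$ lands in the corner $S_v^\mu$, use affinity of $\ell$ there together with the linear derivative of the quadratic term, and then observe that the resulting affine function of $z$ over the simplex is minimised at a vertex. The only cosmetic difference is the order of bookkeeping: the paper rescales the full objective first, minimises over $V(S_v^\mu)$ to obtain \eqref{eq:lemma:simplex-derivative-1}, and then unwraps to \eqref{eq:lemma:simplex-derivative-2}, whereas you split $\ell$ and $g$ first, minimise directly over $V(S)$ to obtain \eqref{eq:lemma:simplex-derivative-2}, and then substitute $b_w=e_{vw}^\mu$ to recover \eqref{eq:lemma:simplex-derivative-1}.
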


\begin{proof}

Let us abbreviate $f(x) := \ell(x) + \frac{1}{2} \|Ax - y\|^2$. We first rescale the directional derivatives to the corner $S_v^\mu$. To this end, note that
\[
  \partial_{z - e_v} f(e_v) 
  = \frac{1}{\mu} \partial_{\mu(z - e_v)} f(e_v) 
  = \frac{1}{\mu} \partial_{[(1-\mu)e_v + \mu z] - e_v)} f(e_v),
\]
where the point $a := (1-\mu) e_v + \mu z$ is in the corner $S_v^\mu$ if and only if $z$ is in $S$. Hence, the last equation and the identity $\frac{1}{\mu} (a - e_v) = z - e_v$ imply that
\begin{equation}
  \argmin_{z \in S} \left\{ \partial_{z - e_v} f(e_v) \right\} - e_v 
  = \frac{1}{\mu} \left( \argmin_{a \in S_v^\mu} \left\{ \partial_{a - e_v} f(e_v) \right\} - e_v \right).
  \label{eq:lemma:proof:simplex-derivative-1}
\end{equation}
Since $\ell$ is affine in the corner $S_v^\mu$, we use \eqref{eq:directional-derivative} to simplify the directional derivative to
\[
  \partial_{a - e_v} \ell(e_v)
  = \ell(a) - \ell(e_v)
  = \sum_{u \in V} a_u \ell(b_u) - \ell(v)
\]
for any $a \in S_v^\mu$ with barycentric coordinates $a_u$ with respect to the vertices $\{b_u | \, u \in V\} = V(S_v^\mu)$ of $S_v^\mu$. Likewise, we have
\begin{multline*}
  \partial_{a - e_v} \left[ \frac{1}{2} \|Ae_v - y\|^2 \right] 
  = (Ae_v - y)^T A (a-e_v) 
  \\
  = \left( \sum_{u \in V} a_u (Ae_v - y)^T b_u \right) - (Ae_v - y)^T Ae_v.
\end{multline*}
In the last two equations the respective last terms $\dots - \ell(e_v)$ and $\dots - (Ae_v-y)^TAe_v$ do not depend on $a$ and therefore the minimizer is given by
\[
  \argmin_{a \in S_v^\mu} \partial_{a-e_v} f(e_v) = \argmin_{a \in S_v^\mu} \sum_{u \in V} a_u \left[ \ell(b_u) + (Ae_v-y)^T A b_u \right].
\]
The right hand side is minimal, if $a$ puts all its weight on the smallest summand, so that
\begin{align*}
  \argmin_{a \in S_v^\mu} \partial_{a-e_v} f(e_v) & = b_u, & u & = \argmin_{b_u \in V(S_v^\mu)} \left[ \ell(b_u) + (Ae_v-y)^T A b_u \right].
\end{align*}
Since $b_u$ are the vertices of $S_v^\mu$, together with \eqref{eq:lemma:proof:simplex-derivative-1} this directly shows \eqref{eq:lemma:simplex-derivative-1}. In order to show \eqref{eq:lemma:simplex-derivative-2}, note that every vertex $b_u \in V(S_v^\mu)$ is a convex combination $(1-\mu) e_v + \mu e_u$ of vertices of $S$. Therefore, we have
\[
  (Ae_v - y)^T A b_u
  = (1-\mu)(Ae_v - y)^T A e_v + \mu (Ae_v - y)^T A e_u.
\]
Using that the first summand of the right hand side is independent of $u$ shows \eqref{eq:lemma:simplex-derivative-2}.

\end{proof}

\subsection{Tracing Turing Machines: No Extra Tape Variable}
\label{sec:tm-descent-no-tape-variables}

In this section we construct a loss function that is used in the construction of Corollary \ref{cor:learn-tm-tape-internal} and allows a gradient descent method to trace the steps of the Turing machine $TM$.  The method is simple but also very costly, therefore we consider a related but more efficient approach in Section \ref{sec:tm-descent-with-tape-variables} below.

\paragraph{The Loss Function}

By the assumption in Section \ref{sec:turing-machines}, for all relevant inputs in $\inputClass$ the Turing machine halts in finite time using at most $\tau$ tape entries at any time during the computation. Hence, we can represent every relevant computation by a finite directed graph. Its vertices $V$ are all states of the Turing machine with the given size constraint and the edges $E$ encode the computational steps, i.e. there is an edge from vertices $v$ to $w$ if and only if the state $w$ follows after $v$ in the Turing machine execution.

In order to construct the corresponding loss function, we first assign weights $\weight_v$ to vertices and $ \weight_{vw}$ pairs of vertices, including non-edges, so that their minimization mimics the Turing machine execution. To this end, let 
\begin{equation}
  \begin{aligned}
    \weight_v & > \weight_{vw}, & & \text{for }(v,w) \in E \\
    \weight_{uv} & > \weight_{vw}, & & \text{for } (u,v), (v,w) \in E \\
    \weight_{vw} & = B, & & \text{for }(v,w) \not\in E\text{ and }(w,v) \not\in E \\
    \weight_{vw} & = \weight_{wv}, & & v,\, w \in V,
  \end{aligned}
  \label{eq:graph-edge-weights}
\end{equation}
where $B$ is an upper bound
\[
  B > \max \left\{\max_{v \in V} \weight_v, \max_{(v,w) \in E} \weight_{vw} \right\}.
\]
of all vertices and computationally legitimate edge weights. Although for the time being the problem is still discrete, we consider moving between vertices along the pairs with minimal weights $\weight_{vw}$, or staying in place if the current vertex weight $\weight_v$ is smaller than the outgoing connections. The first inequality of \eqref{eq:graph-edge-weights} ensures that the correct computational step is preferred over staying in place, the second inequality ensures that we do not follow the computation backwards and the third equality ensures that we do not follow computationally non-valid steps. The last identity is used to ease the transition to a continuous optimization problem later. It is not necessarily required that $\weight_w < \weight_v$ whenever $w$ succeeds $v$ in the computation, although for some constructions below we enforce this property to ensure that the gradient descent method has strictly decreasing loss.

Next, we extend theses weighs to a continuous loss function $\ell: S \subset \real^{|V|} \to \real$ that can be optimized with gradient descent. To this end, we associate each vertex $v \in V$ with the standard unit basis vector $e_v \in \real^V$ and define the domain $S$ as the simplex 
\[
  S := \operatorname{conv}\{e_v | \, v \in V\}
\]
spanned by the basis vectors. The values at the vertices and barycenters $e_{vw}$ of each two vertices $(v,w)$ correspond to the graph weights:
\begin{equation}
  \begin{aligned}
    \ell(e_v) & = \weight_v, & v & \in V \\
    \ell( e_{vw}) & = \weight_{vw} , & v \ne w; \, v, w & \in V.
  \end{aligned}
  \label{eq:loss-vertices}
\end{equation}
The latter condition requires the symmetry of the edge weights in \eqref{eq:graph-edge-weights}.

Each iterate $x_k \in S$ of the gradient descent method will be a vertex. In order to control the gradient direction, we need to fill in values for $\ell$ in their neighborhood. To this end, we assume that 
\begin{align}
  & \ell\text{ is affine on the corners }S_v, & v & \in V(S).
  \label{eq:loss-affine}
\end{align}
A loss function with all properties in \eqref{eq:graph-edge-weights}, \eqref{eq:loss-vertices} and \eqref{eq:loss-affine} can be easily constructed by linear finite elements on a subdivision of the simplex $S$. Nonetheless, in this article, we use the related explicit functions from Appendix \ref{appendix:lagrange-basis}. These have two advantages: First, we do not need all finite element basis functions and therefore can avoid the construction of subdivisions for high dimensional simplices. Second, the explicit formulas demonstrate that they can be implemented by single layer $ReLU$ neural networks. Specifically, from Appendix \ref{appendix:lagrange-basis}, we have the functions $\ell_v$ and $\ell_{vw}$, which are linear in all corners $S_v^\mu$ and have the interpolation properties
\begin{align*}
  \ell_v(e_v) & = 1, &
  \ell_v(e_w) & = 0, &
  \ell_v(e_{uw}) & = 0, &
  \\
  \ell_{vw}(e_r) & = 0, &
  \ell_{vw}(e_{vw}) & = 1, &
  \ell_{vw}(e_{st}) & = 0, &
\end{align*}
for all $v \ne w$, $u \ne \{v,w\}$, $r \in V$ and $(s,t) \not\in \{(v,w), (w,v)\}$. A loss function satisfying \eqref{eq:loss-vertices} and \eqref{eq:loss-affine} is then given by
\begin{equation}
  \ell(x) 
  = \sum_{v \in V(S)} \weight_v \ell_v(x)
  + \sum_{v \ne w \in V} \weight_{vw}\ell_{vw}(x).
  \label{eq:tm-loss-tape-internal}
\end{equation}

Let us next assign some numeric values to the weights $w_v$ and $w_{vw}$ satisfying the requirements \eqref{eq:graph-edge-weights}. By assumption, we know that for all relevant inputs in $\inputClass$, the Turing machine halts in finite time, say with no more than $K-1$ steps. For each state $v \in V$ we run the Turing machine for at most $K$ steps and record the number $k(v)$ it took until it reaches a halting state or set $k(v) = K$ if it does not reach a halting state. For arbitrary numbers $W_0 < W_1 < \cdots < W_K = B$ we then assign
\begin{align*}
  \weight_{v} & = W_{k(v)} \\
  \weight_{vw} & = \left\{ \begin{array}{ll}
    \frac{1}{2} [W_{k(v)} + W_{k(w)}] & (v,w) \in E\text{ or }(w,v) \in E \\
    B & \text{else}
    \end{array}\right. .
\end{align*}

\paragraph{Gradient Descent}

Since the loss function is only defined on the simplex $S$, we use the conditional gradient method (Frank-Wolfe algorithm) for optimization. This is a modification of the gradient descent method, which only allows update directions that are compatible with the convex constraint $x \in S$. It is defined by
\begin{equation}
  \begin{aligned}
    d_k & = \argmin_{y \in S} \left\{ \partial_{y - x_k} \ell(x_k) \right\} - x_k\\
    x_{k+1} & = x_k - \alpha_k d_k.
  \end{aligned}
  \label{eq:cond-gd}
\end{equation}
More commonly, the update direction is defined by $d_k = y_k-x_k$ with  $y_k = \argmin_{y \in S} \dualp{\nabla \ell(x_k), y}$. This is equivalent to the one given above because $\partial_{y - x_k} \ell(x_k) = \dualp{\nabla \ell(x_k), y} - \dualp{\nabla \ell(x_k), x_k}$ and the latter term is independent of $y$. In our case $\ell$ is piecewise linear and may have kinks. Nonetheless, one-sided directional derivatives are well-defined and sufficient for the method above.

The learning rate is either constant $\alpha_k = 1$, or determined by a line search. For the latter, we need the extra condition that 
\begin{equation}
  \begin{aligned}
    \weight_v & > \weight_{vw} > \weight_w, &
    \ell &\phantom{=} \text{is linear on $\operatorname{conv}\{e_v, e_{vw}\}$ and $\operatorname{conv}\{e_{vw}, e_w\}$}, & 
    (v,w) & \in E
  \end{aligned}
  \label{eq:graph-loss-extra}
\end{equation}
to ensure that $\weight_w$ is indeed the minimizer along the gradient direction given by the edge from $e_v$ to $e_w$. For fixed learning rate this is not required since the loss is not necessarily strictly decreasing (at least for exact computation, which we consider in this paper).

\paragraph{Convergence}

The loss function is defined on the simplex $S \subset \real^V$ of a $|V|$ dimensional vector space. Since $|V|$ is the number of all possible states of the Turing machine with tape length bounded by $\tau$, the dimension is very large. It must be at least $|Q| 2^{d\tau} \tau^d$ with $|Q|$ states of the finite control, $2^{d \tau}$ possible tape contents and $\tau^d$ head positions. Practically, this is of course unrealistic and in Section \ref{sec:tm-descent-with-tape-variables} we consider a modified construction that scales linearly in the tape length.

We now show that gradient descent applied to our loss function traces the steps of the Turing machine $TM$.

\begin{proposition}
  \label{prop:tm-descent-no-tape-variable}
  Let $x_k$ be defined by the gradient descent method \eqref{eq:cond-gd} with initial value $x_0 = e_{v_0}$, $v_0 \in \inputClass \subset V(S)$ contained in the vertices of $S$. Assume that the loss function $\ell$ satisfies the conditions \eqref{eq:graph-edge-weights}, \eqref{eq:loss-vertices} and \eqref{eq:loss-affine}. Let the learning rate be constant $\alpha_k =1$ or defined by line search if \eqref{eq:graph-loss-extra} holds in addition.

For consecutive states $v_0, \dots, v_K$ of the Turing machine execution with initial state $v_0$ and halting state $v_K$, we have
\begin{align*}
  x_k & = e_{v_k}, & \text{for }k \le K. \\
  \intertext{If \eqref{eq:graph-loss-extra} holds, we have}
  x_k & = v_{v_K}, & \text{for }k > K
\end{align*}
in addition and the loss is strictly decreasing, i.e. $\ell(x_k) = \weight_{v_k}$ with $\weight_{v_0} > \cdots > \weight_{v_K}$ for $k =1, \dots, K$.
\end{proposition}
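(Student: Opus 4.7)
The plan is to argue by induction on $k$, using Lemma \ref{lemma:simplex-derivative} to convert each conditional gradient step into an explicit comparison of scalar weights. Assume the inductive hypothesis $x_k = e_{v_k}$; since $\ell$ is affine on the corner $S_{v_k}$ by \eqref{eq:loss-affine}, Lemma \ref{lemma:simplex-derivative} applied with $\mu = 1/2$ and no quadratic term ($A = 0$) yields
\[
  \argmin_{z \in S} \partial_{z - e_{v_k}} \ell(e_{v_k}) - e_{v_k} = e_w - e_{v_k},
  \quad
  e_w = \argmin_{e_u \in V(S)} \ell\bigl(\tfrac{1}{2} e_{v_k} + \tfrac{1}{2} e_u\bigr).
\]
By \eqref{eq:loss-vertices} the quantity inside the inner argmin equals $\weight_{v_k}$ when $u = v_k$ and $\weight_{v_k, u}$ otherwise, so the entire step is dictated by comparing these scalar weights.

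I would next identify the minimizer using \eqref{eq:graph-edge-weights} together with determinism of the Turing machine. If $v_k$ is non-halting with unique successor $v_{k+1}$, the first condition of \eqref{eq:graph-edge-weights} gives $\weight_{v_k, v_{k+1}} < \weight_{v_k}$; any other $u$ with $(v_k, u) \in E$ must coincide with $v_{k+1}$ by determinism; for each predecessor $u$ with $(u, v_k) \in E$, the second condition combined with symmetry gives $\weight_{v_k, u} = \weight_{u, v_k} > \weight_{v_k, v_{k+1}}$; and for every remaining $u$, $\weight_{v_k, u} = B$ exceeds each legitimate weight by the bound on $B$. Hence $e_w = e_{v_{k+1}}$, and with $\alpha_k = 1$ the Frank-Wolfe update lands at $x_{k+1} = e_{v_{k+1}}$, closing the induction. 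For the line-search variant, \eqref{eq:graph-loss-extra} forces $\ell$ to be piecewise affine on the segment $[e_{v_k}, e_{v_{k+1}}]$ with values $\weight_{v_k} > \weight_{v_k, v_{k+1}} > \weight_{v_{k+1}}$, so the optimal step size is again $1$.

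For the halting state $v_K$ and $k > K$ under \eqref{eq:graph-loss-extra}, the same argmin analysis applies but no forward edge leaves $v_K$. Applying \eqref{eq:graph-loss-extra} to each incoming edge $(u, v_K) \in E$ yields $\weight_{u, v_K} > \weight_{v_K}$, while non-adjacent $u$ contribute $B > \weight_{v_K}$. The argmin therefore selects $u = v_K$, the direction $e_w - e_{v_K}$ vanishes, and $x_k$ is pinned at $e_{v_K}$ under either learning-rate rule. The strict decrease $\weight_{v_0} > \cdots > \weight_{v_K}$ follows by telescoping $\weight_{v_k} > \weight_{v_k, v_{k+1}} > \weight_{v_{k+1}}$ from \eqref{eq:graph-loss-extra} along the traced path.

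The main obstacle I anticipate is the clean case split inside the argmin: every $u \ne v_k$ must be sorted into the unique successor, a predecessor (handled by the symmetry $\weight_{vw} = \weight_{wv}$), or a non-adjacent vertex, with $B$ invoked precisely for the latter class. Once this bookkeeping is in place, Lemma \ref{lemma:simplex-derivative} makes both the forward induction and the halting-state stability essentially immediate.
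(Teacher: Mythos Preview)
Your proposal is correct and follows essentially the same approach as the paper: induction on $k$, reduction of the Frank--Wolfe direction via Lemma~\ref{lemma:simplex-derivative} to an $\argmin$ over the scalar weights $\weight_{v_k}$ and $\weight_{v_k,u}$, and identification of the minimizer using \eqref{eq:graph-edge-weights} and \eqref{eq:graph-loss-extra}. Your case split (successor / predecessor via symmetry / non-adjacent via $B$) is spelled out more explicitly than in the paper, but the logic is identical.
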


\begin{proof}

By induction, assume that $x_k = e_v$ is a non-halting state. Since the loss $\ell$ is linear on the corner $S_v$, Lemma \ref{lemma:simplex-derivative} implies that the updated direction of the gradient method is given by
\begin{equation}
  d_k = \argmin_{y \in S} \left\{ \partial_{y - x_k} \ell(x_k) \right\} - x_k= \frac{1}{\mu} \left( \argmin_{y \in V(S_v)} \ell(y) - x_k \right).
  \label{ep:prop:tm-descent-no-tape-variable-1}
\end{equation}
Since $v$ is a non-halting state, the interpolation property \eqref{eq:loss-vertices} and the weight properties \eqref{eq:graph-edge-weights} of the loss function, yield $d_k = \frac{1}{\mu} (e_{vw} - e_v) = e_w - e_v$, where $w \in V(S)$ is the successor state of $v$ in the Turing machine execution, i.e. $(v,w) \in E$. Therefore, the next step of the gradient method is given by
\[
  x_{k+1} = e_v - \alpha_k (e_w - e_v).
\]
If the learning rate is $\alpha_k = 1$, this directly yields $x_{k+1} = e_{v_{k+1}}$. In case the learning rate $\alpha_k$ is given by a line search, the extra condition \eqref{eq:graph-loss-extra} ensures that the loss function is piecewise linear along the line segments from $e_v$ to $e_{vw}$ and $e_{vw}$ to $e_w$ with values $\ell(e_v) > \ell(e_{vw}) > \ell(e_w)$. Thus, the line search follows this direction as far as possible without leaving the simplex $S$, resulting in the final point $x_{k+1} = e_w$.

If $x_k = e_{v}$ is a halting state, by the extra condition \eqref{eq:graph-loss-extra}, the minimizer of \eqref{ep:prop:tm-descent-no-tape-variable-1} is $e_v$ itself. Therefore, we have $x_{k+1} = x_k$, irrespective of the choice of $\alpha_k$.

\end{proof}

\subsection{Tracing Turing Machines: With Extra Tape Variable}
\label{sec:tm-descent-with-tape-variables}

The construction in Section \ref{sec:tm-descent-no-tape-variables} has the disadvantage that all possible relevant computations of the Turing machine must be known beforehand and hard-coded into the loss function. This quickly leads to an excessively high dimensional domain of the loss function. In this section, we consider an alternative construction, where the states of the simplex $S$ correspond to the finite control of the Turing machine only and the tape is stored in a extra variables that scales linearly with the tape length. The drawback of this method is that we have to use a fixed step size and the gradient descent iterates are no longer strictly decreasing.

\paragraph{The Loss Function}

The loss function is similar to the construction in Section \ref{sec:tm-descent-with-tape-variables} with the tape content split of into a new variable. We define a computational graph with vertices $V = Q \times \Gamma^d$ composed of the state of the finite control and copies of the tapes at the head positions. The directional edges $E$ correspond to legitimate computational steps, i.e. $(v,w) \in E$ if $v = (q,t)$ and $w = (q', t')$, with successor state $q' = \delta_1(q,t)$ of the finite control and arbitrary tape symbol $t'$ read from the tape.

As before, we associate each vertex $v = (q,t) \in V$ with a standard unit basis vector $e_v = e_{q,t}$ in the $|Q|2^d$-dimensional vector space $\real^V = \real^{Q \times \Gamma^d}$ and define the simplex $S$ as the convex hull of these basis vectors. To account for the $d$ tapes of length $\tau$, we introduce two extra variables: $T \in \real^{\tau \times d}$ with one component for each tape entry and $H \in \{0,1\}^{\tau \times d}$ with exactly one non-zero entry per tape indicating the head positions.

Note that the total number of vertices $2^d |Q|$ plus tape and head dimensions $2 \tau d$ scale linearly in the number of control states and the tape size. This is significantly less than the construction in Section \ref{sec:tm-descent-no-tape-variables} with at least an exponential number $|Q| 2^{d\tau} \tau^d$ of vertices.

As in Section \ref{sec:tm-descent-with-tape-variables}, for vertices $v=(q,t)$ and halting states $F$, we assign weights
\[
  \omega_v = \left\{ \begin{array}{rl}
    - b^3 \scale & q \in F \\
    0 & \text{else},
  \end{array}\right.
\]
with a global scaling factor $\scale$ and some constant $b>0$ to be chosen later. The weights distinguish halting from non-halting states, but are no longer decreasing while following a computational path. This cannot be avoided because the Turing machine may pass a vertex $v$ several times during its execution, with different tape content and head position. Since the tape is no longer included in $S$, we cannot distinguish these states with different weights. 

For two vertices $v=(q,t)$ and  $w=(q',t')$, the edge weights are given by
\begin{equation}
  \omega_{vw} = \left\{ \begin{array}{rl}
    - \left(b^3 + \sum_{\substack{i=1\\t'_i = t_i}}^d b \right) \scale & q' = \delta_1(q,t) \\
    0 & \text{else}.
  \end{array}\right.,
  \label{eq:loss-weight-TM-tape-external}
\end{equation}
The first choice is negative and therefore favors correct computational steps. Its structure is chosen to balance some quadratic terms that we add to the loss function later, for read and write operations to the tape variables $T$ and $H$. These quadratic terms may have zero gradients, in which case the extra sum in the definition of the edge weights $\omega_{vw}$ ensures correct successor states. The powers of $b$ are used to balance several contributions to the gradient. For later reference, we choose it sufficiently large so that
\begin{align}
  b^3 & \ge \frac{1}{2} (b^3 + db), &
  \frac{1}{2} b^3 - db & \ge 2d b^2, &
  b^2 & > b.
  \label{eq:loss-weight-constants-TM-tape-external}
\end{align}

We extend these weights to a continuous loss function $\ell_S$ on $S$ with the properties
\begin{equation}
  \begin{aligned}
    \ell_S (e_v) & = \omega_v, & v & \in V \\
    \ell_S \left( e_{vw}^{1/4} \right) & = \omega_{vw} , & (v,w) & \in E \\
    \ell_S \left( e_{vw}^{3/4} \right) & = \frac{1}{2} \omega_{vw} , & (v,w) & \in E \\
    \ell_S \left( e_{vw}^{1/4} \right) & = 0 , & (v,w) & \not\in E\text{ and }(w,v) \not\in E \\
    \ell_S \left( e_{vw}^{3/4} \right) & = 0 , & (v,w) & \not\in E\text{ and }(w,v) \not\in E \\
    \ell_S\text{ is affine on the Corners }&S_v^{1/4}, & v & \in V
  \end{aligned}
  \label{eq:loss-vertices-no-tape}
\end{equation}
Since $e_{vw}^{1/4} = e_{wv}^{3/4}$, the loss $\ell_S$ is not well defined if both $(v,w)$ and $(w,v)$ are legitimate computations in $E$, i.e. the Turing machine goes back and forth between two states in $S$. Without loss of generality, we assume that this cannot happen, i.e. that
\begin{equation}
  \text{For all vertices $v,w \in V$ not both $(v,w)$ and $(w,v)$ are contained in $E$}.
  \label{eq:no-back-step}
\end{equation}
This can easily be achieved by the following modification of the Turing machine $TM$: We triple the states to $\bar{Q} := \{[q,r] | \, q \in Q, \, r \in \{0,1,2\}\}$ and extend the transition function to
\[
  \bar{\delta}([q,r],t) = ([\delta(q,t), r+1\mod 3], \delta_2(q,t), \delta_3(q,t)).
\]
Thus, the extended Turing machine performs the exact same computations as the original one, with the exception that in each step the new index $r$ of state $[q,r]$ cycles through the numbers $0,1,2$. Therefore, if the state $v=([q,r], t])$ is followed by $w=([q',r+1 \mod 3], t')$ the latter is followed by some $[q'',r+2 \mod 3] \ne [q, r]$ because $r + 2 \mod 3 \ne r$. Therefore, we never have the sequence of states $v \to w \to v$, which implies \eqref{eq:no-back-step}.

In order to extend the loss function to the full simplex, we use the two functions $\ell_v^{1/4}$ and $\bar{\ell}_{vw}$ defined in Lemma \ref{lemma:lagrange-corner} and \eqref{eq:lagrange-interior-unsymmetric} in Appendix \ref{appendix:lagrange-basis}, which can be easily constructed with single layer $ReLU$ units. They have the properties
\begin{equation*}
  \begin{aligned}
    \ell_v^{1/4}(e_v) & = 1, 
    \\
    \ell_v^{1/4}(e_u) & = 0, & & & v  \ne & u \in V
    \\
    \ell_v^{1/4}\left( e_{tu}^{1/4} \right) & = 0, &
    \ell_v^{1/4}\left( e_{tu}^{3/4} \right) & = 0 , 
    & t,u & \in V
    \\
    \ell_v^{1/4} \text{ is linear on the corners }& S_u^{1/4}, & & & u & \in V.
  \end{aligned}
\end{equation*}
and
\begin{equation*}
  \begin{aligned}
    \bar{\ell}_{vw}(e_u) & = 0, & & & u & \in V 
    \\
    \bar{\ell}_{vw}\left( e_{vw}^{1/4} \right) & = 1, &
    \bar{\ell}_{vw}\left( e_{vw}^{3/4} \right) & = \frac{1}{2} 
    \\
    \bar{\ell}_{vw}\left( e_{tu}^{1/4} \right) & = 0, &
    \bar{\ell}_{vw}\left( e_{tu}^{3/4} \right) & = 0, & 
    \{t,u\} & \ne \{v,w\} 
    \\
    \bar{\ell}_{vw} \text{ is linear on the corners }& S_u^{1/4}, & & & u & \in V.
  \end{aligned}
\end{equation*}
Along the edge $(v,w)$, the function $\bar{\ell}_{vw}$ is piecewise linear as shown in Figure \ref{fig:basis-profile}. Therefore, a loss function satisfying all properties in \eqref{eq:loss-vertices-no-tape} is given by
\begin{align*}
  \ell_S(x) 
  = \sum_{v \in V} \omega_v \ell_v^{1/4}(x)
  + \sum_{v,w \in V} \omega_{vw} \bar{\ell}_{vw}(x).
\end{align*}

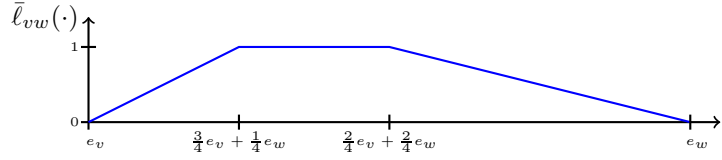
\begin{figure}
  \begin{center}
    \begin{tikzpicture}
      \newcommand{\h}{0.1}
    
      \draw[->,thick] (-\h,0) -- (8.4,0);
      \draw[->,thick] (0,-\h) -- (0,1.4) node[left] {$\bar{\ell}_{vw}(\cdot)$};
      \draw[blue, thick] (0,0) -- (2,1) -- (4,1) -- (8,0);

      \draw[thick] (2,\h) -- (2,-\h);
      \draw[thick] (4,\h) -- (4,-\h);
      \draw[thick] (8,\h) -- (8,-\h);
      \draw[thick] (-\h,1) -- (\h,1);
    
      \node[below] at (0,0) {\tiny $\phantom{\frac{1}{1}}e_v$};
      \node[below] at (2,0) {\tiny $\frac{3}{4} e_v + \frac{1}{4} e_w$};
      \node[below] at (4,0) {\tiny $\frac{2}{4} e_v + \frac{2}{4} e_w$};
      \node[below] at (8,0) {\tiny $\phantom{\frac{1}{1}}e_w$};

      \node[left] at (0,0) {\tiny $0$};
      \node[left] at (0,1) {\tiny $1$};
    \end{tikzpicture}
  \end{center}
  \caption{Function $\bar{\ell}_{vw}$ restricted to the line segment from $e_v$ to $e_v$.}
  \label{fig:basis-profile}
\end{figure}

The loss $\ell_S$ is used to trace the steps of the finite control via gradient descent, but in addition we need some extra terms for reading and writing to the tape variable $T$ and moving the head positions $H$. To this end, we first define some matrices $\tapeCurrent$, $\tapeNext$ to read out tape symbols from vertices $v = (q,t)$ and a bilinear map $\headShift$ to shift head positions and then add corresponding least squares terms to the loss function.

The matrices $\tapeCurrent, \tapeNext \in \real^{\tau \times d}$ yield the tape symbols of each vertex at the current and next state of the Turing machine, defined by
\begin{align*}
  \tapeCurrent e_{q,t} & = t \in \Gamma^d, &
  \tapeNext e_{q,t} & = \delta_{2}(q,t) \in \Gamma^d.
\end{align*}
The shift of the head positions $\headShift$ is defined by
\[
  [\headShift(e_{q,t})H]_{i,j} = h_{i+\delta_3(q,t), j},
\]
for basis vectors and then expanded to be bi-linear in both of its variables. It shifts all tape head indicators by the amount $\delta_3(q,t)$ determined by the current step of the Turing machine. Note that we always assume that the tapes are sufficiently large so that the head positions never reach its boundary. Hence, we may insert blank symbols if necessary.

We can now match the tape symbols $t$ in a vertex $x = e_v = e_{q,t}$ with tape symbols in the tape $T$. To this end, let us denote by subscripts $i$ columns of tape matrices, so that e.g. $T_i \in \real^\tau$ is the content of the $i$-th tape of the Turing machine. Then, adding a term $|T^T_i \cdot  [\headShift(x)H]_i - (\tapeCurrent x)_i|^2$ to the loss function matches the tape symbol of the $i$-th tape at the next head position with the tape symbol in the vertex $v$. We can do the same for all tapes by adding a term $\|\diag(T^T \headShift(x)H) - \tapeCurrent x\|^2$, where ``$\diag$'' is the vector of diagonal entries of the $d \times d$ matrix $T^T \headShift(x)H$.

However, with this expression we have no control if we read or write a symbol from the tape to the vertex. Therefore, we include extra ``stop gradient'' operations denoted by $\stopgrad{\cdot}$, which are readily available in current neural network libraries. Terms inside $\stopgrad{\cdot}$ are considered constant when differentiated, so that e.g. $\frac{d}{dx} f(x) \stopgrad{g(x)} = f'(x)g(x)$ as opposed to the correct $f'(x) g(x) + f(x) g'(x)$. With this extra operation, we can extend our above example to $\|\stopgrad{\diag(T^T \headShift(x)H)} - \tapeCurrent x\|^2$. Now the first term is considered constant so that minimizing it changes $x$ so that $\tapeCurrent x$ matches $\diag(T^T \headShift(x)H)$, i.e. we write the tape symbols at the head positions $\headShift(x)H$ into the component $t = \tapeCurrent x$ of $x$.

We add a global additive constant $c$ and several read/write operations to the loss function:
\begin{multline}
  \ell(x,T,H) = c + \ell_S(x)
  + \frac{1}{2} \scale \|\diag(T^T \stopgrad{H}) - \stopgrad{\tapeNext x}\|^2
  \\
  + \frac{1}{2} (4 b^2 \scale) \|\stopgrad{\diag(T^T \headShift(x) H)} - \tapeCurrent x\|^2
  + \frac{1}{2} \scale \|\stopgrad{\diag(\headShift(x) H)} - H\|^2.
  \label{eq:tm-loss-tape-external}
\end{multline}
The first least squares term writes $t' = \delta_2(q,t)$ to the current head position, the second term reads the content of the next head position into the next state $(q',t')$ and the last term moves the current head position to the next head position.

\paragraph{Gradient Descent}

The loss function is trained by the conditional gradient method
\begin{equation}
  \begin{aligned}
    d_k & = \argmin_{y \in S} \left\{ \partial_{y - x_k} \ell(x_k, T_k, H_k) \right\} - x_k\\
    x_{k+1} & = x_k + d_k \\
    T_{k+1} & = T_k - \frac{1}{\scale} \nabla_T \ell(x_k,T_k, H_k) \\
    H_{k+1} & = H_k - \frac{1}{\scale} \nabla_H \ell(x_k,T_k, H_k).
  \end{aligned}
  \label{eq:gradient-descent-tape-external}
\end{equation}
We slightly abuse notation and denote by $\nabla_T$ the gradient with respect to the writable tapes of $T$ only, excluding the read-only tapes. As in Section \ref{sec:tm-descent-no-tape-variables}, the loss function has kinks so that the gradient with respect to $x$ is not defined everywhere. Nonetheless, the one-sided directional derivatives used in the method are well defined at all points encountered during training.

\paragraph{Convergence}

The following proposition shows that minimizing the loss function traces the steps of the Turing machine.

\begin{proposition}
  \label{prop:tm-descent-tape-excluded}
  For $k=0, \dots, K$, let $v_k = (q_k, t_k)$ with tape variables $T_k$, $H_k$ be consecutive states of the Turing machine $TM$ with $t_k = \diag(T^T H)$, where $q_1, \dots, q_{K-1} \not\in F$ are non-halting states and $q_K \in F$ is a halting state. Assume that the loss function $\ell$ is defined by \eqref{eq:tm-loss-tape-external} and satisfies \eqref{eq:loss-weight-constants-TM-tape-external} and \eqref{eq:no-back-step}. Then $x_k := e_{v_k} \in \real^V$ and $T_k$, $H_k$ satisfy the gradient descent updates \eqref{eq:gradient-descent-tape-external} and in addition, we have
  \begin{equation}
    \begin{aligned}
      c \le \ell(x_k, T_k, H_k) & \le c + 8 b^2 d \scale + 3d\scale, & k & < K \\
      \ell(x_k, T_k, H_k) & \le c + 8 b^2 d \scale + 3d\scale - b^3\scale, & k & = K.
    \end{aligned}
    \label{eq:tm-descent-tape-excluded-stopping}
  \end{equation}
\end{proposition}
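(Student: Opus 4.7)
The plan is to induct on $k$, with inductive hypothesis that $x_k = e_{v_k}$ and that $(T_k, H_k)$ encodes the $k$-th Turing machine configuration (in particular $t_k = \diag(T_k^T H_k)$). I would then verify that the three updates in \eqref{eq:gradient-descent-tape-external} produce the $(k{+}1)$-st configuration.

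The tape and head updates are handled by the stop-gradient conventions. In the loss \eqref{eq:tm-loss-tape-external}, only the first squared term contributes to $\nabla_T \ell$ (the second has $T$ inside $\stopgrad{\cdot}$, the third is independent of $T$), and its gradient at the current iterate is concentrated on the head positions marked by $H_k$; subtracting $\scale^{-1} \nabla_T \ell$ then overwrites exactly those entries with $\tapeNext x_k = \delta_2(q_k, t_k)$, which matches the Turing machine's write. Similarly only the third squared term contributes to $\nabla_H \ell$, and the step yields $H_{k+1} = \diag(\headShift(x_k) H_k)$, i.e., $H_k$ shifted by $\delta_3(q_k, t_k)$.

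The substantive part is the conditional-gradient step for $x$. Since $\ell_S$ is affine on the corner $S_{v_k}^{1/4}$ and only the second squared term differentiates in $x$, I would invoke Lemma~\ref{lemma:simplex-derivative} with $\mu = 1/4$; the frozen vector inside the squared term evaluates to $\bar t_{k+1} := \diag(T_k^T \headShift(e_{v_k}) H_k)$, the symbols at the next head positions. The lemma then reduces the update to selecting $e_u = e_{(q', t')}$ that minimizes $\omega_{v_k u} - b^2 \scale (\bar t_{k+1} - t_k)^T t'$. Writing $D := |\{i : \bar t_{k+1,i} \ne t_{k,i}\}|$, the target successor $(\delta_1(q_k, t_k), \bar t_{k+1})$ attains the value $-\scale(b^3 + (d - D) b + 2 D b^2)$. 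The main obstacle will be showing that every other candidate strictly exceeds this value, which I would handle by splitting into three cases: an invalid successor state forfeits the $-b^3\scale$ bonus entirely; a valid state with $t'$ disagreeing with $\bar t_{k+1}$ somewhere inside the mismatch set $D_k$ loses at least $4 b^2 \scale$ while changing the $b$-level cost by at most $2 d b \scale$; a valid state differing only outside $D_k$ pays at least $b\scale$. The constants \eqref{eq:loss-weight-constants-TM-tape-external} are precisely what makes each of these gaps positive, so the bookkeeping of powers of $b$ is where the work concentrates.

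For the loss bounds \eqref{eq:tm-descent-tape-excluded-stopping}, I would evaluate each term of \eqref{eq:tm-loss-tape-external} at the current configuration. For $k < K$ we have $\omega_{v_k} = 0$, so $\ell_S(x_k) = 0$ and the three squared norms are non-negative, giving the lower bound $\ell \ge c$. For the upper bound, the coordinates of $t_k$, $\bar t_{k+1}$, and $\delta_2(q_k,t_k)$ lie in $\{-1,1\}$, so their pairwise differences are in $\{-2,0,2\}$, bounding the first two squared norms by $4d$; each head indicator column shifts by a single position, contributing at most $2$ to the third squared norm and $2d$ in total. After the prefactors $\tfrac12\scale$, $2 b^2\scale$, and $\tfrac12\scale$, the three contributions are at most $2d\scale$, $8 b^2 d\scale$, and $d\scale$, summing to the stated upper bound for $k < K$. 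At the halting step $k = K$, only $\omega_{v_K} = -b^3\scale$ changes, accounting for the additional $-b^3\scale$ term.
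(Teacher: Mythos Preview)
Your overall strategy matches the paper's: induct on $k$, handle the $T$ and $H$ updates through the stop-gradient structure, and apply Lemma~\ref{lemma:simplex-derivative} with $\mu=1/4$ for the $x$-update. The tape and head computations and the loss-bound estimates in \eqref{eq:tm-descent-tape-excluded-stopping} are essentially identical to the paper's.

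There is, however, a genuine gap in your case analysis for the $x$-update. You write the quantity to be minimized over $e_u=e_{(q',t')}$ as $\omega_{v_k u} - b^2\scale(\bar t_{k+1}-t_k)^T t'$, but the first term should be $\ell_S\bigl(e_{v_k u}^{1/4}\bigr)$, and these do \emph{not} coincide when $u$ is a \emph{predecessor} of $v_k$, i.e.\ when $(u,v_k)\in E$ but $(v_k,u)\notin E$. In that situation $\omega_{v_k u}=0$ by \eqref{eq:loss-weight-TM-tape-external}, yet
\[
\ell_S\bigl(e_{v_k u}^{1/4}\bigr)=\ell_S\bigl(e_{u v_k}^{3/4}\bigr)=\tfrac12\,\omega_{u v_k}\in\bigl[-\tfrac12(b^3+db)\scale,\,-\tfrac12 b^3\scale\bigr].
\]
So a predecessor vertex does \emph{not} ``forfeit the $-b^3\scale$ bonus entirely''; it keeps roughly half of it. This is precisely why the construction in \eqref{eq:loss-vertices-no-tape} assigns the asymmetric values $\omega_{vw}$ at $e_{vw}^{1/4}$ and $\tfrac12\omega_{vw}$ at $e_{vw}^{3/4}$, and why the no-back-step assumption \eqref{eq:no-back-step} is imposed: together they break the successor/predecessor symmetry.

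The paper treats predecessors as a separate third possibility for $\ell_S\bigl(e_{v_k u}^{1/4}\bigr)$ and then invokes the first two inequalities of \eqref{eq:loss-weight-constants-TM-tape-external} to show that the gap $\tfrac12(b^3-db)\scale$ between the successor and predecessor ranges of $\ell_S$ dominates the maximal contribution $2db^2\scale$ of the quadratic term; only after $q'=\delta_1(q_k,t_k)$ is secured does it pin down $t'$ componentwise. Your three-case split can be repaired by inserting this predecessor case, but as written the first case is incorrect and the argument does not close.
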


Note that unlike Proposition \ref{prop:tm-descent-no-tape-variable}, we cannot allow a line search for the gradient descent method any longer. Indeed non-halting states $x_k$ are vertices of the simplex $S$ and therefore the first component $\ell_S(x_k) = 0$ of the loss function is independent of $k$ and the remaining components are not necessarily decreasing.

The loss bounds in \eqref{eq:tm-descent-tape-excluded-stopping} are second order in $b$ for $k<K$ and third order for $k=K$. Therefore, for $b$ sufficiently large, the loss in a halting state is strictly smaller than the loss in any non-halting state, which can be used as a stopping criterion.

\begin{proof}

  Without loss of generality, we assume that the additive constant $c$ in the loss function is zero. By induction, assume that $x_k = e_v = e_{q,t}$, $T_k \in \Gamma^\tau = \{-1,1\}^\tau$, and $H_k \in \{0,1\}^\tau$ are iterates of the gradient descent method and that $q \not\in F$ is not an accepting state. The variable $x$ is updated by $x_{k+1} = x_k + d_k$ with direction
\[
  d_k + x_k = \argmin_{y \in S} \partial_{y-x_k} \left( \ell_S(x_k) + \frac{1}{2} (4 b^2 \scale)\|\tapeCurrent x_k - \stopgrad{\diag(T_k^T \headShift(x_k)H_k}\|^2 \right) , 
\]
where we have disregarded two least square terms in the loss function \eqref{eq:tm-loss-tape-external} because their $x$-gradient is zero by the stop gradient operation. Since all $\bar{\ell}_{vw}$ are linear in all corners $S_u^{1/4}$, $u \in V$, and we can disregard gradients of $\stopgrad{\diag(T_k^T \headNext x_k)}$, by Lemma \ref{lemma:simplex-derivative} with $\mu=1/4$, we have
\begin{equation}
  x_{k+1} = e_v + d_k = \argmin_{e_w \in V(S)} \ell_S\left( e_{vw}^{1/4} \right) + b^2 \scale \left( \tapeCurrent x_k - \diag(T_k^T \headShift(x_k)H_k) \right)^T \tapeCurrent e_w.
  \label{eq:proof:1:prop:tm-descent-tape-excluded}
\end{equation}
We have to show that the minimizer $e_w = e_{q',t'}$ is the next state of the Turing machine, i.e. $q' = \delta_1(q,t)$ and $t'$ contains a copy of the tape at the current head location shifted by $\delta_3(q,t)$. 

Let us first prove that $e_w$ has the correct state $q'$ by showing that the first term in \eqref{eq:proof:1:prop:tm-descent-tape-excluded} dominates the second. To this end, note that the numbers $(\tapeCurrent x_k)_i$, $(T_k)_i^T \cdot (\headShift(x_k)H_k)_i$ and $(\tapeCurrent e_w)_i$, for all tapes $i=1, \dots, d$ are tape symbols contained in $\{-1,1\}$ so that $b^2 \scale \left( (\tapeCurrent x_k)_i - (T_k)_i^T \cdot (\headShift(x_k)H_k \right)_i ) (\tapeCurrent e_w)_i \in \{-2 b^2 \scale,0,2 b^2 \scale\}$. It follows that the second summand $b^2 \scale \left( \tapeCurrent x_k - \diag(T_k^T \headShift(x_k)H_k) \right) \tapeCurrent e_w$ in \eqref{eq:proof:1:prop:tm-descent-tape-excluded} is contained in $\{2j b^2 \scale | \, j \in \{-d, \dots, d\} \}$. 

Since $q$ is a non-halting state, the vertex weight $\weight_v=0$ is zero and therefore for $\ell_S \left( e_{vw}^{1/4} \right)$ we have three possibilities: It is in the interval $\left[- (b^3 + db) \scale, - b^3 \scale \right]$ if $q',t'$ is a successor state of $q,t$ in the Turing machine execution, in $\left[ -\frac{1}{2} (b^3 + db) \scale, - \frac{1}{2} b^3 \scale \right]$ if $q',t'$ is a predecessor state of $q,t$ with the extra factor $1/2$ coming from the profile in Figure \ref{fig:basis-profile} and $0$ else, irrespective the tape symbols $t$ and $t'$. By assumption \eqref{eq:loss-weight-constants-TM-tape-external} we have $b^3 \ge \frac{1}{2} (b^3 + db)$ so that successor states are preferred over predecessor states. By the same assumption \eqref{eq:loss-weight-constants-TM-tape-external}, we also have $\frac{1}{2} (b^3 - db) \scale \ge 2d b^2 \scale$, i.e. the smallest possible gap between predecessor and successor states in the first gradient component $\ell(\cdot)$ is bigger than the maximal contribution $2db^2\scale$ of the second term in \eqref{eq:proof:1:prop:tm-descent-tape-excluded}. Hence, the minimizer $e_w$ of the directional derivative \eqref{eq:proof:1:prop:tm-descent-tape-excluded} with vertex $w = (q',t')$, is a successor state so that $q'=\delta_1(q,t)$. 

Next, we show that $t'$ contains the correct tape symbol. Since we already know that $q'$ is a successor state, we use the definition \eqref{eq:loss-weight-TM-tape-external} of the weights to simplify the gradient descent update \eqref{eq:proof:1:prop:tm-descent-tape-excluded} to $x_{k+1} = e_{q', t'}$ with 
\begin{equation*}
  t' = \argmin_{t' \in \real^d} -\left(b^3 + \sum_{\substack{i=1\\t'_i = t_i}}^d b \right) \scale 
  + b^2 \scale \left( \tapeCurrent x_k - \diag(T_k^T \headShift(x_k)H_k) \right)^T \tapeCurrent e_{q',t'}.
\end{equation*}
Eliminating constant summands and sorting the remaining terms with respect to components $i \in \{1, \dots, d\}$, we obtain
\begin{equation}
  t'_i = \argmin_{t'_i \in \real} - b \scale \delta_{t_i, t'_i} 
  + b^2 \scale \left( (\tapeCurrent x_k)_i - (T_k)_i^T \cdot (\headShift(x_k)H_k)_i ) \right) (\tapeCurrent e_{q',t'})_i,
  \label{eq:proof:2:prop:tm-descent-tape-excluded}
\end{equation}
where $\delta_{t_i,t'_i}$ is one if $t_i = t'_i$ and zero else. Since we want to read the tape symbol of tape $i$ at the next head position into the component $t'_i$ of the successor state $e_w$, we have to show that the minimizer of the last equation satisfies $t'_i = (T_k)_i^T \cdot (\headShift(x_k)H_k)_i$. We distinguish two cases:

\begin{enumerate}
  \item \emph{Case 1:} Let us assume that $t_i = (T_k)_i^T \cdot (\headShift(x_k)H_k)_i$, i.e. the tape symbol of the current vertex $x_k$ already matches the one of the successor state of the TM. In this case, the second term in \eqref{eq:proof:2:prop:tm-descent-tape-excluded} is zero and the term $\delta_{t_i,t'_i}$ in \eqref{eq:proof:2:prop:tm-descent-tape-excluded} ensures that $t_i'=t_i$, as required.

  \item \emph{Case 2:} Let us now assume that $t_i \ne (T_k)_i^T \cdot (\headShift(x_k)H_k)_i$. One easily verifies that the second term of \eqref{eq:proof:2:prop:tm-descent-tape-excluded} is $-2 b^2 \scale$ for $t'_i = (T_k)_i^T \cdot (\headShift(x_k)H_k)_i$ and $2 b^2 \scale$ for $t_i' \ne (T_k)_i^T \cdot (\headShift(x_k)H_k)_i$. The first term of \eqref{eq:proof:2:prop:tm-descent-tape-excluded} is either $b \scale$ or zero, depending on the choice of $t'_i$. Since $b^2 \scale > b\scale$ by assumption \eqref{eq:loss-weight-constants-TM-tape-external}, the second term dominates the choice and we obtain $t_i' = (T_k)_i^T \cdot (\headShift(x_k)H_k)_i$ as required.
\end{enumerate}

It remains to consider the gradient descent updates $T_{k+1}$ and $H_{k+1}$. By the stop gradient operations, we have
\begin{align*}
  \nabla_T \ell(x_k, T_k, H_k) 
  & = \nabla_T \frac{\scale}{2} \|\diag(T_k^T \stopgrad{H_k}) - \stopgrad{\tapeNext x_k}\|^2\\
  & = \nabla_T \frac{\scale}{2} \sum_{i \not\in \readOnly} |(T_k)_i^T \cdot \stopgrad{(H_k)_i}) - \stopgrad{(\tapeNext x_k)_i}|^2\\
  & = \scale \left\{(H_k)_i [(T_k)_i^T \cdot (H_k)_i - (\tapeNext x_k)_i] \right\}_{i \in W}
\end{align*}
Recall our convention that $\nabla_T$ is the gradient excludes the read-only tapes $\readOnly$. Therefore, we confine the sums and components above to the writable tapes not in $\readOnly$. The remaining read-only tapes are unchanged by both the Turing machine execution and the gradient descent updates.  Using that $(H_k)_i \in \{0,1\}^\tau$ is the indicator vector for the head location and $(\tapeNext x_k)_i = \delta_2(q,t)_i$, the last equation implies that 
\[
  (T_{k+1})_i = (T_k)_i - (H_k)_i [ (T_k)_i^T \cdot (H_k)_i - \delta_2(q,t)_i].
\]
so that $T_{k+1}$ is indeed the content of the tape in the next step of the Turing machine.

The update of the head location $H_k$ works analogously. We have
\begin{align*}
  \nabla_H \ell(x_k, T_k, H_k)
  & = \nabla_H \frac{\scale}{2} \|\stopgrad{\headShift(x_k) H_k} - H_k\|^2 \\
  & = - \scale [\headShift(x_k) H_k - H_k]
\end{align*}
and therefore
\begin{equation*}
  H_{k+1} 
  = H_k - \frac{1}{\scale} \nabla_H \ell(x_k, T_k, H_k)
  = H_k + [ \headShift(x_k) H_k - H_k ]
  = \headShift(x_k) H_k,
\end{equation*}
i.e. the new head position is shifted by $\headShift(x_k)$ as required.

In order to show the bounds \eqref{eq:tm-descent-tape-excluded-stopping}, note that all tape symbols are contained in $\{-1,1\}$, and the head locations are indicator vectors with entries in $\{0,1\}$. Hence, the last three summands in the loss \eqref{eq:tm-loss-tape-external} are in the intervals $[0, 2 d \scale]$ and $[0,8 b^2 d \scale]$, $[0, d\scale]$, respectively. The bound \eqref{eq:tm-descent-tape-excluded-stopping} then follows from the observation that on vertices $\ell_S$ is $-b^3 \scale$ in a halting state and zero else.

\end{proof}

\section{Application to Supervised Learning}
\label{sec:supervised-learning}

In this section, we prove the main result of the paper as stated in Section \ref{sec:main}, i.e. we construct an extended network that adjusts its weights according to the computation of a Turing machine during supervised learning. This is achieved by connecting the output of the Turing machine tracing from the last section with the weights of the primary network and to ensure that proper gradients are passed back into the tracing component by an optimization of a standard least squares loss.

\subsection{Construction of the Extended Network}
\label{sec:extended-network-construction}

The construction of the extended network $F(x, (s,t))$ is shown and briefly explained in Figure \ref{fig:nn-extended} in Section \ref{sec:extended-network-overview}. In this section, we fill in some details that we skipped over in its initial description.

The extended network mostly consists of two main branches connected by some extra ``switches''. The first branch contains the primary network $f_\theta(x)$ depending on the input $x$ and its parameters $\theta$, which are no longer trainable but outputs of some hidden layers. The second branch consists of a network that emulates the Turing machine $TM$ and is constructed similar to the Turing machine tracing in Propositions \ref{prop:tm-descent-no-tape-variable} and \ref{prop:tm-descent-tape-excluded}. The switches then pass one of these branches to the network output. The components of the extended network in Figure \ref{fig:nn-extended} are defined as follows:

\begin{itemize}
  \item \emph{$x$}: Neural network input.
  \item \emph{$z \in \real^{n \times n}$} holds a copy of the labels $y$ after the first training step. Note that we cannot pass $y$ into the Turing machine directly because it is only implicitly accessible through the loss function.
  \item \emph{$s,T,H$}: Trainable parameters, representing the Turing machine's state as in Sections \ref{sec:tm-descent-no-tape-variables}, \ref{sec:tm-descent-with-tape-variables}. They Contain $s \in S$ for the state state of the Turing machine and, depending on the construction, $T,H \in \real^{d\tau}$ for the tape symbols and head positions. 

  Although $s,T,H$ are mostly trainable weights, the inputs $x$ and $z$ are written on the read-only tapes of the Turing machine. All edges leading into and out of this node use quantization and de-quantization, except for the one to $f_{TM}$. No derivatives of the (de)quantization are required during training.
  \item \emph{$\theta$}: Weights of the primary network, read from the output tape of the Turing machine and not trainable.
  \item \emph{$f_\theta$}: The primary neural network, depending on input $x$ and weights $\theta$.
  \item \emph{$f_{TM}$}: Component of the network associated with the Turing machine $TM$. With initial $x$ and $y$ on its read-only tape, it halts with weights $\theta$ on the output tape and returns a vector $u$ that is fed into the downstream node $s_{net}$ as described below.
  \item \emph{$s_{net}$}: Switch between the Turing machine output $f_{TM}$ and the output of the primary network $f_\theta$.
  \item \emph{$s_{init}$}: Switch for reading the labels $y$ into $z$ and the remaining training.
  \item \emph{$out$}: Output of the extended network.
  \item Red boxes: Trainable weights.
  \item \emph{Dotted Arcs} denote ``stop-gradient'' operations, i.e. for a dotted arc from node $n_1$ to node $n_2$, we artificially set the gradient $\frac{\partial n_2}{\partial n_1} := 0$, even if that does not correspond the actual gradient. This means that we consider $n_1$ non-trainable or constant whenever it passes through $n_2$. This is a common feature implemented in most current deep learning libraries.
  \item Function arguments are ordered left to right with respect to incoming arcs in Figure \ref{fig:nn-extended} so that e.g. $s_{init}$ takes arguments in the order $s_{init}(s_{net}, z)$.
\end{itemize}

The Turing Machine $TM$ contains one read-only tape for the inputs $x$ and $y$. Unlike the other tapes, the read-only tape is not a trainable network weight but a hidden layer of width $\tau$ containing a quantization of the input $x$ and the weight $z$ (which will hold a copy of the labels $y$). Likewise, the weights $\theta$ of the primary network become a hidden layer containing a de-quantization of the output tape of the Turing machine. On a practical computer all variables are already stored in quantized format and can directly be written to and from the tapes. Formally, (de)quantizations can also be computed by a neural network from floating point numbers, see Appendix \ref{sec:quantization} for more details.

In the description of the main result, in particular for the loss \eqref{eq:lsq-loss}, we have split up the trainable weights as $(s,t)$ into variables $s$ that are restricted to a simplex and variables $t$ that are unrestricted. Now, with the full description of the extended network, we have $s=s$ and $\thz = T,H,z$. Likewise, the vector valued learning rate $\alpha$ in \eqref{eq:intro:gd-tm-supervised-learning} is split into three components $\alpha_T$, $\alpha_H$ and $\alpha_z$ for the three components $H,T,z$ of the trainable weights, respectively. 

After training, both switches $s_{init}$ and $s_{net}$ are ``open'', the network computes $f_\theta(x)$ with $\theta$ read from the Turing machine's tape.   Nonetheless, in the following sections, we describe the training process bottom-up in the network starting from the initial state.

\subsection{Reading the Labels}

In the first gradient descent step, we read the labels into the trainable variable $z$ and then shut off this reading process for the rest of the training. This is done with the switch $s_{init}(u,z)$, which, if fully turned on or off, lets either the first or second input pass through unchanged. The switch is triggered by the size of its second input variable $z$. Specifically, for two matrices $u,z \in \real^{n \times m}$ with the shape of the data $y$ and $\epsilon$ given by \eqref{eq:labels-size}, the switch has the following properties:
\begin{align*}
  s_{init}(u,0) & = 0, & \nabla_u s_{init}(u,0) & = 0, & \nabla_z s_{init}(u,0) & = I
  \\
  s_{init}(u,z) & = u, & \nabla_u s_{init}(u,z) & = I, & \nabla_z s_{init}(u,z) & = 0, &
  \text{for all }\|z\|_2^2 & \ge \epsilon,
\end{align*}
where $I \in \real^{(n \times m) \times (n \times m)}$ is the identity matrix. This can be easily realized by a cutoff function $\psi: [0, \infty) \to [0,1]$, with $\psi(t) = 1$ for $0 \le t < \frac{1}{3} \epsilon$ and $\psi(t) = 0$ for $\frac{2}{3} \epsilon < t$ and
\[
  s_{init}(u,z) := [1-\psi(\|z\|_2^2)] u + \psi(\|z\|_2^2) z.
\]
The cutoff function does not need to be differentiable in the interval $[\epsilon/3, 2\epsilon/3]$ and can be constructed e.g. from two $ReLU$ units. Let us now consider the first gradient descent step. With initial value $z_0 = 0$, the chain rule implies
\begin{align*}
  \nabla_z out & = \nabla_{s_{net}} s_{init}(s_{net},0) \nabla_z s_{net} + \nabla_z s_{init}(s_{net},0) I = I
  \\
  \nabla_\Box out & = \nabla_{s_{net}} s_{init}(s_{net},0) \nabla_\Box s_{net} + \nabla_z s_{init}(s_{net},0) 0 = 0
\end{align*}
for $\Box \in {s,T,H}$. Together with $out = 0$ for input $z_0 = 0$, it follows that
\begin{align*}
  \nabla_\Box \frac{1}{2} \|out-y\|_2^2 & = 0, &
  \nabla_z \frac{1}{2} \|out-y\|_2^2 & = out-y = -y &
\end{align*}
and therefore, the after one gradient descent step with learning rate $\alpha_z := 1$ for $z$, we have
\begin{align*}
  s_1 & = s_0, &
  T_1 & = T_0, &
  H_1 & = H_0, &
  z_1 & = z_0 - 1 (-y) = y.
\end{align*}
Since by assumption $\|y\|_2^2 > \epsilon$, for the next gradient descent steps we have 
\begin{align*}
  \frac{1}{2} \nabla_z\|out - y\|_2^2  
  & = (out-y)^T [\nabla_{s_{net}} s_{init}(s_{net},y) \nabla_z s_{net} + \nabla_z s_{init}(s_{net},y) I] 
  \\
  & = (out-y)^T \nabla_z s_{net} = 0,
\end{align*}
where we have used that $s_{net}$ only depends of $z$ via a stop gradient operation so that the latter gradient is zero. This directly implies that
\begin{equation*}
  \begin{aligned}
    z_0 & = 0, &
    z_k & = y, & 
    k & \ge 1.
  \end{aligned}
\end{equation*}
Likewise, we have
\begin{equation}
  \begin{aligned}
    \frac{1}{2} \nabla_\Box \|out-y\|^2  
    & = (out-y)^T [\underbrace{\nabla_{s_{net}} s_{init}(s_{net},y)}_{=I} \nabla_\Box s_{net} + \nabla_\Box s_{init}(s_{net},y) \underbrace{\nabla_\Box z}_{=0}] 
    \\
    & = (out-y)^T \nabla_\Box s_{net},
  \end{aligned}
  \label{eq:grad-s-init}
\end{equation}
for $\Box \in \{s,T,H\}$ so that the parameters
\begin{equation*}
  \begin{aligned}
    s_{k+1} 
    & = s_k - \left[\argmin_{\sigma \in S} \partial_{\sigma - s_k} \frac{1}{2} \|out - y\|^2 - s_k \right]
    \\
    & = s_k - [\argmin_{\sigma \in S} (out-y)^T \partial_{\sigma - s_k} s_{net} - s_k] \\
    T_{k+1} & = T_k - \alpha_T \frac{1}{2} \nabla_T \|out-y\|^2 = T_k - \alpha_T (out-y)^T \nabla_T s_{net} \\
    H_{k+1} & = H_k - \alpha_H \frac{1}{2} \nabla_H \|out-y\|^2 = H_k - \alpha_H (out-y)^H \nabla_H s_{net}
  \end{aligned}
\end{equation*}
train as if we would train the partial network with output $s_{net}$ only. Technically, as in \eqref{eq:directional-derivative} the node $f_{TM}$ inside $s_{net}$ only has one-sided directional derivatives, but one easily verifies that this does not alter the chain rule.

\subsection{Switching between Neural Network and Turing Machine}

The second switch $s_{net}$ selects between the primary network $f_\theta(x)$ and the Turing machine $f_{TM}(s, t)$. Just as the switch $s_{init}$, it has two inputs $s_{net}(f, u)$, with $f = f_\theta$ and $u = f_{TM}$ to shorten the notation below, and the properties
\begin{align}
  s_{net}(f,u) & = f, & \nabla_f f_{net}(f,u) & = I, & \nabla_u s_{net}(f,u) & = 0
  \label{eq:def-s-net-1}
  \intertext{if $\|u\|_2^2 < 2 \underline{B}$ and}
  s_{net}(f,u) & = u, & \nabla_f s_{net}(f,u) & = 0, & \nabla_u s_{net}(f,u) & = I
  \label{eq:def-s-net-2}
\end{align}
if $\|u\|_2^2 > 2 \overline{B}$ for some $\underline{B} < \overline{B}$ to be specified later. Similar to $s_{init}$ this can be realized by a smooth cutoff function $\phi: [0, \infty) \to [0,1]$ with $\phi(t) = 0$ for $t \le 2 \underline{B} + \delta$ and $\phi(t) = 1$ for $t \ge 2 \overline{B} - \delta$ for some $\delta > 0$ with $2 \underline{B} + \delta < 2 \overline{B} - \delta$ and
\[
  s_{net}(f,u) = [1-\phi(\|u\|_2^2)] f + \phi(\|u\|_2^2) u.
\]
Let $k \in \{1, \dots, K\}$ be a gradient descent step after the initialization in the first step and before the iterate $K$ when training is terminated by the stopping criterion \eqref{eq:intro:gd-tm-supervised-stopping} and $f_{TM}^k$ the outputs of the network node $f_{TM}$ in the corresponding steps. Starting from its initial state $s_0 = s_1$ till the $(K-1)$st step, one step before the halting state, we will ensure below that 
\begin{equation}
  \begin{aligned}
    \|f_{TM}^k\|_2^2 & > 2 \overline{B}, &  k & = 1, \dots, K-1
    \\
    \|f_{TM}^k\|_2^2 & < 2 \underline{B}, &  k & = K.
  \end{aligned}
  \label{eq:tm-loss-bounds}
\end{equation}
Thus before the Turing machine has halted, we have
\[
  out_k = f_{TM}^k
\]
and 
\begin{equation}
  \nabla_\Box out \stackrel{\eqref{eq:grad-s-init}}{=} \nabla_\Box s_{net} = \nabla_{f} s_{net} \nabla_\Box f_\theta + \nabla_{f_{TM}} s_{net} \nabla_\Box f_{TM} = \nabla_\Box f_{TM} 
  \label{eq:grad-tm}
\end{equation}
for $\Box \in \{s,T,H\}$, whereas after halting, we have
\[
  out = f_\theta.
\]
Again, we have used that the chain rule for the one-sided directional derivatives in $f_{TM}$ works as usual. Thus, we obtain the updates
\begin{equation}
  \begin{aligned}
    s_{k+1} 
    & = s_k - \left[\argmin_{\sigma \in S} \partial_{\sigma - s_k} \frac{1}{2} \|out - y\|^2 - s_k \right]
    \\
    & = s_k - [\argmin_{\sigma \in S} (out-y)^T \partial_{\sigma - s_k} f_{TM} - s_k] \\
    T_{k+1} & = T_k - \alpha_T \frac{1}{2} \nabla_T \|out-y\|^2 = T_k - \alpha_T (out-y)^T \nabla_T f_{TM} \\
    H_{k+1} & = H_k - \alpha_H \frac{1}{2} \nabla_H \|out-y\|^2 = H_k - \alpha_H (out-y)^H \nabla_H f_{TM}
  \end{aligned}
  \label{eq:gradient-descent-updates-2}
\end{equation}
for $k = 1, \dots, K-1$ so that while the Turing machine has not halted, the network output corresponds to the output of $f_{TM}$ and all weight updates are equivalent to training $f_{TM}$ alone, independent of the primary network $f_\theta$. After the Turing machine is finished, the switch $s_{net}$ deactivates the Turing machine branch $f_{TM}$ an outputs the primary network $f_\theta(x)$ with the values of $\theta$ that the Turing machine has written on its output tape.

\subsection{Training the Turing Machine}
\label{sec:train-TM}

Finally, we have to construct the node $f_{TM}$ such that the bounds \eqref{eq:tm-loss-bounds} of its input hold and gradient descent training traces the Turing machine calculations so that the final state of the output tape contains the weights $\theta = TM(x,y)$. To this end, we apply either Proposition \ref{prop:tm-descent-no-tape-variable} or Proposition \ref{prop:tm-descent-tape-excluded}, which both rely on a carefully crafted loss $\ell_{TM}$ associated with the Turing machine $TM$, which is different from the least squares loss in our application. We reconcile these two losses as follows: After reading the labels $y$ into $z_k = y$, we can calculate a vector $f_\perp(z)$ that is orthogonal to $z$ and has unit length, see Appendix \ref{appendix:orthogonal}. We then define the output of the node $f_{TM}$ by
\[
  f_{TM} = \sqrt{2 \ell_{TM}} f_\perp(z_k).
\] 
By choosing appropriate constants, we ensure that $\ell_{TM}$ is always non-negative below. This directly yields
\begin{align*}
  (out - y)^T \nabla_\Box f_{TM}
  & = \left(f_\perp(z) \sqrt{2 \ell_{TM}} - y \right)^T f_\perp(z) \nabla_\Box \sqrt{2 \ell_{TM}}
  = 2 \sqrt{\ell_{TM}} \nabla_\Box \sqrt{\ell_{TM}} \\
  & = \nabla_\Box \left( \sqrt{\ell_{TM}} \right)^2 
  = \nabla_\Box \ell_{TM}
\end{align*}
for $\Box \in \{s,T,H\}$ so that with \eqref{eq:gradient-descent-updates-2} we obtain the gradient descent updates
\begin{equation}
  \begin{aligned}
    s_{k+1} 
    & = s_k - [\argmin_{\sigma \in S} \partial_{\sigma - s_k} \ell_{TM} - s_k] \\
    T_{k+1} & = T_k - \alpha_T \nabla_T \ell_{TM} \\
    H_{k+1} & = H_k - \alpha_H \nabla_H \ell_{TM}
  \end{aligned}
  \label{eq:trace-TM}
\end{equation}
for the gradient descent steps $1, \dots, K-1$. These updates are identical to the gradient descent methods in Proposition \ref{prop:tm-descent-no-tape-variable} and Proposition \ref{prop:tm-descent-tape-excluded} (with appropriate choices of the learning rates $\alpha_T$ and $\alpha_H$) so that for $k=1, \dots, K-1$, the variables $s_k$, $T_k$ and $H_k$ have the same values if we train the loss $\frac{1}{2} \|out - y\|^2$ or alternatively the loss $\ell_{TM}$ directly. Note that we also have
\begin{equation}
  \|f_{TM}\|^2 = 2 \ell_{TM}  
  \label{eq:f_TM-size}
\end{equation}
which we will need for triggering the switch $s_{net}$, later.

We choose the stopping bound $B_{stop}$ in \eqref{eq:intro:gd-tm-supervised-stopping} such that $\|f_{TM(x,y)}(x) - y)\| \le B_{stop}$ and for the loss $\ell_{TM}$, we use one of the following two options:

\begin{enumerate}

  \item With the loss function of Proposition \ref{prop:tm-descent-no-tape-variable} and extra condition \eqref{eq:graph-loss-extra}, we include the full state of the Turing machine in the simplex $S$ with $|Q|$ possible states of the finite control, $2^\tau$ possible tape contents and $\tau$ possible head positions. Since the tape is already included in the simplex $S$ we do not need the extra variables $T,H$ and set $\tau = 0$ so that $T,H \in \real^0$ are trivial. The states $s_k \in S$ during training will be vertices of $S$ only so that we can easily read the weights $\theta$ of the primary network $f_\theta$ from $S$ directly. In summary, we have
  \begin{align*}
    S & \subset \real^{|Q| 2^{d\tau} \tau^d}, &
    T,H & \in \real^0.
  \end{align*}
  In order to choose the constants $\underline{B}$, $\overline{B}$ from the definition \eqref{eq:def-s-net-1}, \eqref{eq:def-s-net-2} of $s_{net}$ and $\weight_v$ and $B$ from the loss $\ell_{TM}$ in \eqref{eq:graph-edge-weights}, let $\weight_0 = \min_{v \in V(S)} \weight_v$ and $\weight_1 \in \min \{\weight_v | \, v \in V(S), \, \weight_v \ne \weight_0\}$ be the smallest and second but smallest weights. Then, we choose all weights $\weight_v$ and $\underline{B}$, $\overline{B}$ so that
  \begin{equation*}
    \weight_0 < B_{stop} = \underline{B} < \overline{B} < \weight_1.
  \end{equation*}
  With this choice, by Proposition \ref{prop:tm-descent-no-tape-variable} and \eqref{eq:trace-TM} the minimization of $\frac{1}{2}\|out - y\|^2$ traces the steps of the Turing machine $TM$ until we reach a halting state. Then by \eqref{eq:f_TM-size}, we have $\|f_{TM}\|^2 = 2 \ell_{TM} < 2 \underline{B}$ and by definition, the switch $s_{net}$ is flipped so that it lets the primary network $f_\theta$ pass to the output with weights $\theta = TM(x,y)$ read from the final tape of $f_{TM}$. 

  \item With the loss function $\ell_{TM}$ of Proposition \ref{prop:tm-descent-tape-excluded}, the finite control and tape symbols at the head positions are contained in the states of the simplex $S$ and the full tape and head positions in the variables $T$, $H$ so that 
  \begin{align*}
    S & \subset \real^{2^d |Q|}, &
    T,H & \in \real^{\tau \times d},
  \end{align*}
  where without loss of generality we assume that the Turing machine has two tapes: One for input and on for output. In order to select the constants $\underline{B}$ and $\overline{B}$ from definition \eqref{eq:def-s-net-1}, \eqref{eq:def-s-net-2} and $b$, $c$ and $\scale$ of the loss function \eqref{eq:loss-weight-TM-tape-external} and \eqref{eq:tm-loss-tape-external}, we first choose $b$ sufficiently large so that the condition \eqref{eq:loss-weight-constants-TM-tape-external} holds and $8 b^2 d \scale + 3d\scale - b^3\scale < 0$, which by \eqref{eq:tm-descent-tape-excluded-stopping} ensures that the loss for the halting states is strictly smaller than the loss for the non-halting states. Next, we select the positive global scaling factor $\scale$, the global additive constant $c$ and the bounds $\underline{B}$, $\overline{B}$ from \eqref{eq:tm-loss-bounds} such that the loss $\ell_{TM}(x,T,H)$ is positive and 
  \[
    c + 8 b^2 d \scale + 3d\scale - b^3\scale
    < B_{stop}
    = \underline{B} 
    < \overline{B}
    \le c.
  \]
  By \eqref{eq:tm-descent-tape-excluded-stopping} the left hand side is an upper bound for $\ell_{TM}(x,T,H)$ at halting states and the right and side a lower bound for non-halting states. Note that the left hand side is smaller than the right hand side by our choice of $b$. As for the alternative construction, this ensures that once a halting state is reached, the switch $s_{net}$ passes the primary network $f_\theta$ to the output.

\end{enumerate}

Once the switch $s_{net}$ is flipped, the primary network $f_\theta$ is passed to the output with weights $\theta = TM(x,y)$ read from the final state of $f_{TM}$'s tape. Since we have chose $B_{stop}$ such that $\|f_{TM(x,y)}(x) - y)\| \le B_{stop}$, the stopping criterion of the gradient descent method applies and no further training steps are executed.

\subsection{Number of Layers}

Finally, let us count the number of layers in the extended network. The switches $s_{init}$ and $s_{net}$ can be implemented with $3$ layers each: E.g. for 
\[
  s_{init}(u,z) := [1-\psi(\|z\|_2^2)] u + \psi(\|z\|_2^2) z
\]
we have one layer for the norm $\|z\|_2^2$, one for the cutoff function $\psi$ and one for the outer products of the scalar weights with the vectors $u$ and $z$.

Next, $f_{TM}$ is defined by
\[
  f_{TM} = \sqrt{2 \ell_{TM}} f_\perp(z_k).
\] 
We need one layer for the product of the square root with $f_\perp$ and can implement the latter two terms on parallel layers. The function $f_\perp$ can be implemented with $5$ layers as in Appendix \ref{appendix:orthogonal}. The square root is one layer and $\ell_{TM}$ can be implemented in one layer for the outer sum in \eqref{eq:tm-loss-tape-internal} or \eqref{eq:tm-loss-tape-external} and two layers for all involved functions from Section \eqref{appendix:lagrange-basis}. The least squares terms in \eqref{eq:tm-loss-tape-external} can be computed in one extra layer that is parallel to the others.

In summary, the extend network requires at most 12 layers plus the layers form the primary network $f$ and eventually some layers for quantization and de-quantization of the Turing machine tapes variables $T$.

\appendix

\section{Lagrange Basis}
\label{appendix:lagrange-basis}

Let $e_v$ with indices $v$ in some index set $V$ be an orthonormal basis of $\real^V$ with dimension $m = |V|$ and $S$ be the \emph{standard simplex} spanned by the vertices $e_v$, $v \in V$. Define the sub-simplex $S_v^\mu$ by cutting out a corner of the standard simplex $S$ with vertices $e_v$ and $e_{vw}^\mu = (1-\mu) e_v + \mu e_w$ for $v \ne w \in V$. 

We need a Lagrange basis function $\ell_v^\mu$ or equivalently a finite element hat function associated with this corner. Although the construction is standard, the following lemma provides an explicit formula, which entails that it can be easily constructed from a $ReLU$ network.

\begin{lemma}
  \label{lemma:lagrange-corner}
  For $v \in V$ and $0 < \mu \le 1$, define 
  \begin{align*}
    \ell_v^\mu & := \max \{0, (n_v^\mu)^Tx - d_\mu\}, &
    n_v^\mu & := \frac{1}{\mu m} \sum_{w \ne v} e_v - e_w, &
    d_\mu & := \frac{(1-\mu)m-1}{\mu m}.
  \end{align*}
  Then $\ell_v^\mu$ is linear on $S_v^\mu$ and 
  \begin{align*}
    \ell_v^\mu(e_v) & = 1, &
    \ell_v^\mu(e_{vw}^\mu) & = 0, &
    \ell_v^\mu(x) & = 0, &
    x & \in S \setminus S_v^\mu.
  \end{align*}
  
\end{lemma}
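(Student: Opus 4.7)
The plan is to reduce everything to a single explicit formula for the affine function $(n_v^\mu)^T x - d_\mu$ in terms of the barycentric coordinate $x_v$ of $x \in S$, and then read off all four conclusions from that formula together with a simple characterization of $S_v^\mu$ as a sublevel set of $x_v$.

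First, I would compute the dot products $(n_v^\mu)^T e_u$ at the vertices of $S$. Since the $e_u$ are orthonormal, a direct calculation gives $(n_v^\mu)^T e_v = \frac{m-1}{\mu m}$ and $(n_v^\mu)^T e_w = -\frac{1}{\mu m}$ for $w \ne v$. Linearity in $x$ then yields, for any $x = \sum_w x_w e_w \in S$,
\[
  (n_v^\mu)^T x = \frac{m-1}{\mu m}\, x_v - \frac{1}{\mu m} \sum_{w \ne v} x_w = \frac{x_v}{\mu} - \frac{1}{\mu m},
\]
using $\sum_w x_w = 1$. Subtracting $d_\mu$ and simplifying gives the clean identity
\[
  (n_v^\mu)^T x - d_\mu = \frac{x_v - (1-\mu)}{\mu}.
\]
This is the one computation where it is easy to slip up on arithmetic, so I would be careful here; but once it is in hand, the rest is essentially reading.

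Second, I would identify $S_v^\mu$ as the slab $\{x \in S : x_v \ge 1-\mu\}$. Every convex combination $\lambda e_v + \sum_{w \ne v} \lambda_w e_{vw}^\mu$ with weights summing to $1$ has $v$th coordinate $1 - \mu(1-\lambda) \ge 1-\mu$, and conversely any $x \in S$ with $x_v \ge 1-\mu$ can be written in this form by taking $\lambda = (x_v - (1-\mu))/\mu$ and $\lambda_w = x_w/\mu$ for $w \ne v$, which are non-negative and sum to one. Combined with the identity above, this shows the affine function $(n_v^\mu)^T x - d_\mu$ is non-negative exactly on $S_v^\mu$ and non-positive on $S \setminus S_v^\mu$.

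Third, the four stated properties drop out immediately. On $S \setminus S_v^\mu$ the $\max$ selects $0$, giving the last claim. On $S_v^\mu$ the $\max$ selects the affine expression, which equals $(x_v - (1-\mu))/\mu$ and is therefore linear in $x$; plugging in $x = e_v$ (so $x_v = 1$) yields $1$, and plugging in $x = e_{vw}^\mu$ (so $x_v = 1-\mu$) yields $0$. I anticipate no real obstacle beyond keeping the $\mu$'s and $m$'s straight in the initial identity; everything else follows formally from that computation plus the sublevel-set description of the corner.
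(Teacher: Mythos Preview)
Your proof is correct and follows essentially the same approach as the paper: both start from the dot products $(n_v^\mu)^T e_v = \frac{m-1}{\mu m}$ and $(n_v^\mu)^T e_w = -\frac{1}{\mu m}$ and then identify where the affine function $(n_v^\mu)^T x - d_\mu$ changes sign. Your packaging into the single identity $(n_v^\mu)^T x - d_\mu = (x_v - (1-\mu))/\mu$ together with the explicit description $S_v^\mu = \{x \in S : x_v \ge 1-\mu\}$ is a slightly cleaner way to organize the same computation the paper carries out pointwise at the vertices $e_v$, $e_{vw}^\mu$, and $e_w$.
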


Geometrically, the scaled normal vector $\frac{\mu m}{m-1} n_v^\mu = e_v - \frac{1}{m-1} \sum_{w \ne v} e_w$ points from the barycenter of the face opposite of $e_v$ to $e_v$.

\begin{proof}

    Since $\mu$ is fixed throughout the proof, we abbreviate $n_v = n_v^\mu$, $d = d_\mu$ and $\ell_v = \ell_v^\mu$. For any $w \ne v$, we have
\begin{align*}
  e_v^T n_v^T = \frac{1}{\mu m} \sum_{u \ne v} e_v^T e_v - e_v^T e_u = \frac{m-1}{\mu m}, &
  \\
  e_w^T n_v^T = \frac{1}{\mu m} \sum_{u \ne v} e_w^T e_v - e_w^T e_u = - \frac{1}{\mu m}.
\end{align*}
This directly yields
\begin{align*}
  n_v^T e_v - d & = 1, &
  n_v^T e_{vw}^\mu - d & = 0, &
  n_v^T e_w - d & = - \frac{1-\mu}{\mu} \le 0
\end{align*}
and therefore
\begin{align*}
  \ell_v(e_v)& = 1, &
  \ell_v(e_{vw}^\mu)& = 0, &
  \ell_v(e_w)& = 0.
\end{align*}
Note that $n_v^T x - d = 0$ is zero on the face spanned by the vertices $e_{vw}^\mu$, $v \ne w$ and therefore $\ell_v^\mu$ is linear on $S_v^\mu$ and its complement $S \setminus S_v^\mu$. In fact $\ell_v^\mu$ is identically zero on the latter because it is a convex set with vertices $e_w$ and $e_{vw}^\mu$ with $w \ne v$ and $\ell_v^\mu$ is zero on all of these.

\end{proof}

In the following Lemma, we only consider $\mu = 1/2$ and use the abbreviations $S_v = S_v^{1/2}$ and $e_{vw} = e_{vw}^{1/2} = \frac{1}{2}(e_v + e_w)$. Recall that $V(\cdot)$ denotes the vertices of a simplex.

\begin{lemma}
  For $v, w \in V$ with $v \ne w$, and barycenter $b = \frac{1}{m} \sum_{u \in V} e_u$ of $S$, define 
  \begin{align*}
    \ell_{vw} & := \max \{0, n_{vw} x - c\} - \ell_v^{1/2}(x) - \ell_w^{1/2}(x), &
    n_{vw} & := 4( e_{vw} - b), &
    c & := 1 - \frac{4}{m},
  \end{align*}
  with $\ell_v^{1/2}$ and $\ell_w^{1/2}$ defined in Lemma \ref{lemma:lagrange-corner}. Then $\ell_{vw}$ is linear on each corner $S_v$ and 
  \begin{align*}
    \ell_{vw}(e_{vw}) & = 1, &
    \ell_{vw}(u) & = 0, &
    u & \in [V(S_v) \cup V(S_w)] \setminus \{e_{vw}\}, &
    \\
    & & \ell_{vw}(x) & = 0, &
    x & \in S_u, \, u \not\in \{v,w\}.
  \end{align*}
  
\end{lemma}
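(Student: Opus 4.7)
The plan is to proceed by a direct vertex-by-vertex verification, exploiting the fact that both the claimed interpolation conditions and the claimed corner-wise linearity are determined entirely by the values of the constituent functions on the vertices of the subdivision $\{e_u\} \cup \{e_{uu'}\}$. Since $\ell_v^{1/2}$ and $\ell_w^{1/2}$ are already understood from Lemma \ref{lemma:lagrange-corner}, the whole task reduces to analyzing the affine function $a(x) := n_{vw}^T x - c$ and the location of its zero set.

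First I would compute $n_{vw}^T e_u$ and $n_{vw}^T e_{uu'}$ for all vertices and midpoints by expanding $n_{vw} = 2(e_v + e_w) - (4/m)\sum_r e_r$ and using orthonormality of the $e_u$. A short calculation gives $a(e_v) = a(e_w) = 1$, $a(e_u) = -1$ for $u \notin \{v,w\}$, $a(e_{vw}) = 1$, $a(e_{vw'}) = a(e_{wv'}) = 0$ whenever $w' \notin \{v,w\}$, and $a(e_{uu'}) = -1$ whenever $\{u,u'\} \cap \{v,w\} = \emptyset$. With these values in hand, the interpolation identities at the vertices of $S$ and at the midpoints fall out immediately: at $e_v$ and $e_w$ one has $\max\{0,a\} = 1$ but this is canceled by $\ell_v^{1/2}$ or $\ell_w^{1/2}$; at $e_{vw}$ one has $\max\{0,a\} = 1$ with both $\ell_v^{1/2}$ and $\ell_w^{1/2}$ vanishing, giving the value $1$; at all other vertices and midpoints the three summands are individually zero.

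Next I would verify corner-wise linearity. The subtracted terms $\ell_v^{1/2}$ and $\ell_w^{1/2}$ are affine on every corner by Lemma \ref{lemma:lagrange-corner}, so everything reduces to showing that $\max\{0, a(x)\}$ is affine on each $S_u$, which in turn follows if $a$ has constant sign on $S_u$. Since $S_u$ is the convex hull of $e_u$ together with the midpoints $e_{uu'}$ for $u' \neq u$, and $a$ is affine, the sign on $S_u$ is determined by its values on these vertices. Reading off from the table in the previous step shows $a \ge 0$ throughout $S_v$ and throughout $S_w$ (where in fact $\max\{0,a\} = a$, which is affine), while $a \le 0$ throughout every $S_u$ with $u \notin \{v,w\}$ (where $\max\{0,a\} \equiv 0$).

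The last point to tidy up is the second batch of vanishings: on each corner $S_u$ with $u \notin \{v,w\}$ the first term vanishes identically as just shown, and both $\ell_v^{1/2}$ and $\ell_w^{1/2}$ vanish by Lemma \ref{lemma:lagrange-corner} since $S_u$ and $S_v$, respectively $S_u$ and $S_w$, meet only in their lower-dimensional boundary where $\ell_v^{1/2}$ and $\ell_w^{1/2}$ are already zero; hence $\ell_{vw} \equiv 0$ on $S_u$. The only subtle point I anticipate is the sign check for $a$ on $S_v$ and $S_w$: one has to be sure that the midpoints $e_{vw'}$ with $w' \neq w$ give value $0$ rather than something negative, which is exactly what the choice $c = 1 - 4/m$ is engineered to ensure, so the constants in the definition leave no slack and this is the place where the construction is genuinely rigid.
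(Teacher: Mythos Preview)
Your proposal is correct and follows essentially the same approach as the paper: both proceed by a direct vertex-by-vertex computation of $n_{vw}^T x - c$ on all $e_u$ and $e_{uu'}$, combine this with the values of $\ell_v^{1/2}$ and $\ell_w^{1/2}$ from Lemma~\ref{lemma:lagrange-corner} to verify the interpolation identities, and then deduce corner-wise linearity from the fact that the affine function $a(x)=n_{vw}^T x-c$ does not change sign on any corner (the paper phrases this as ``the kinks from the $ReLU$ units are not contained in the interior of any of the corners''). Your sign table and the paper's computed values coincide exactly.
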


Geometrically, $n_{vw}x - c = 0$ defines a plane that contains all $(2|V|-3)$ vertices of $S_v$ and $S_w$ except $e_v$, $e_w$ and $e_{vw}$, even though these are more vertices than needed to span a plane. In addition, the plane cuts the simplex $S$ into two pieces, one that contains $S_v$ and $S_w$ and another that contains all other corners $S_u$, $u \ne v, w$.

\begin{proof}

First note that for any two indices $s,t \in V$, we have
\begin{align*}
  e_s^T b = e_{st}^T b = \frac{1}{m}
\end{align*}
Furthermore, one directly verifies that
\begin{align*}
  e_{vw}^T e_{vw} & = \frac{1}{2}, &
  e_{vw}^T e_{vu} & = \frac{1}{4}, &
  e_{vw}^T e_{uw} & = \frac{1}{4}, &
  e_{vw}^T e_{tu} & = 0, &
  \\
  e_{vw}^T  e_{v} & = \frac{1}{2}, &
  e_{vw}^T  e_{w} & = \frac{1}{2}, &
  e_{vw}^T  e_{u} & = 0, &
\end{align*}
for $t,u \not\in \{v,w\}$. This yields
\begin{align*}
  n_{vw}^T e_{vw}^T - c & = 1, &
  n_{vw}^T e_{vu}^T - c & = 0, &
  n_{vw}^T e_{uw}^T - c & = 0, &
  n_{vw}^T e_{tu}^T - c & = -1, &
  \\
  n_{vw}^T e_{v}^T - c & = 1, &
  n_{vw}^T e_{w}^T - c & = 1, &
  n_{vw}^T e_{u}^T - c & = -1, &
\end{align*}
and therefore, using Lemma \ref{lemma:lagrange-corner}, we conclude that
\begin{align*}
  \ell_{vw}(e_{vw})& = 1, &
  \ell_{vw}(e_{vu})& = 0, &
  \ell_{vw}(e_{uw})& = 0, &
  \ell_{vw}(e_{tu})& = 0, &
  \\
  \ell_{vw}(e_v)& = 0, &
  \ell_{vw}(e_w)& = 0, &
  \ell_{vw}(e_u)& = 0.
\end{align*}
for all $u \not\in \{v,w\}$. From the value of $n_{vw}x -c$ on the vertices of the simplices $S_t$, $t \in V$ and their convexity, we conclude that the kinks from the $ReLU$ units are not contained in the interior of any of the corners $S_t$, $t \in V$ and hence $\ell_{vw}$ is linear in each of them. This directly implies the lemma.

\end{proof}

\begin{lemma}
  For $v,w \in V$, $v \ne w$ and $0 < \mu \le 1$, let $S_{vw}^\mu$ be the ``unsymmetric corner'' or simplex with vertices $e_v$ and $(1-\mu) e_v + \mu e_w$ and $\frac{1}{2}(e_v + e_u)$ for $u \ne v,w$. Then there is a $n \in \real^V$ and $d \in \real$ such that the function $\ell^s_{vw,\mu}(x) = \operatorname{ReLU}(n^T x - d)$ satisfies
  \begin{align*}
    \ell_{vw}^\mu&\text{ is linear on } S_{vw}^\mu,&
    \ell_{vw}^\mu(e_v) & = 1
    \\
    \ell_{vw}^\mu&\text{ is zero on } S \setminus S_{vw}^\mu, &
    \ell_{vw}^\mu \big((1-\mu) e_v + \mu e_w\big) & = 0
    \\
    & & \ell_{vw}^\mu \left(\frac{1}{2}(e_v + e_u) \right) & = 0
  \end{align*}
  for all $u \ne v,w$.
  
\end{lemma}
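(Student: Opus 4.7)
The plan is to identify the affine function $g(x) := n^T x - d$ (whose ReLU will give $\ell_{vw,\mu}^s$) with the barycentric coordinate of $x$ with respect to the vertex $e_v$ of $S_{vw}^\mu$. That coordinate is affine in $x$, equals $1$ at $e_v$, vanishes on the opposite face (hence at all the other listed vertices of $S_{vw}^\mu$), and for $x \in S$ is nonnegative exactly when $x \in S_{vw}^\mu$; so passing through ReLU produces all three required properties simultaneously.

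First I would impose the interpolation conditions $g(e_v) = 1$, $g\bigl((1-\mu)e_v + \mu e_w\bigr) = 0$, and $g\bigl(\tfrac{1}{2}(e_v + e_u)\bigr) = 0$ for $u \ne v,w$. Since $\sum_{u} x_u = 1$ on $S$, I may write $g(x) = \sum_{u} x_u(n_u - d)$ and absorb $d$ into one free parameter. The system becomes diagonal and yields
\begin{align*}
n_v - d & = 1, &
n_w - d & = 1 - 1/\mu, &
n_u - d & = -1 \quad (u \ne v,w),
\end{align*}
which determines $n$ up to an irrelevant shift (the value of $d$ does not affect $g$ on $S$). This pins down the candidate pair $(n,d)$.

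Next I would compute the barycentric coordinates $(\alpha_v, \alpha_{vw}, (\alpha_u)_{u \ne v,w})$ of an arbitrary $x \in S_{vw}^\mu$ with respect to the vertices $e_v$, $e_{vw}^\mu$, and $\tfrac{1}{2}(e_v + e_u)$. Expanding this combination in the standard basis gives a triangular linear system in the standard coordinates $x_u$, solved by $\alpha_{vw} = x_w/\mu$ and $\alpha_u = 2 x_u$ for $u \ne v,w$; substituting $\sum_u x_u = 1$ yields
\[
\alpha_v \;=\; 2x_v + 2x_w - x_w/\mu - 1 \;=\; \sum_u x_u(n_u - d) \;=\; g(x).
\]
So $g$ literally equals the $e_v$-barycentric coordinate on $S_{vw}^\mu$. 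This immediately gives affinity on $S_{vw}^\mu$ and the required interpolation values, and since $\alpha_{vw}$, $\alpha_u$ are automatically nonnegative for $x \in S$, the sign of $g(x)$ on $S$ is the sign of $\alpha_v$: positive inside $S_{vw}^\mu$ and nonpositive on $S \setminus S_{vw}^\mu$. Applying ReLU therefore zeroes $g$ outside $S_{vw}^\mu$ and preserves it inside, producing a function with exactly the claimed properties.

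The only step with any content is the barycentric identity $\alpha_v = g(x)$; once that is verified, affinity, vertex values, and vanishing off $S_{vw}^\mu$ all follow together without separate casework. I do not expect a genuine obstacle, just bookkeeping in the linear algebra that sets up the identity.
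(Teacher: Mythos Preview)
Your proposal is correct. Both your argument and the paper's rest on the same underlying fact: the face of $S_{vw}^\mu$ opposite $e_v$ spans a hyperplane that separates $S_{vw}^\mu$ from $S\setminus S_{vw}^\mu$, and the affine function vanishing on that hyperplane and equal to $1$ at $e_v$ does the job once passed through ReLU. The paper argues this purely geometrically: it notes that the vertices $(1-\mu)e_v+\mu e_w$ and $\tfrac12(e_v+e_u)$ are in general position, takes the hyperplane they span, observes $e_v$ is not on it, and rescales. You instead compute $n$ and $d$ explicitly and then identify $g(x)=n^Tx-d$ with the barycentric coordinate $\alpha_v$ of $x$ relative to the vertex $e_v$ of $S_{vw}^\mu$; since the other barycentric coordinates $\alpha_{vw}=x_w/\mu$ and $\alpha_u=2x_u$ are automatically nonnegative on $S$, membership in $S_{vw}^\mu$ reduces to the sign of $\alpha_v=g(x)$. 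Your route yields concrete formulas (useful if one actually wants to build the ReLU unit) and makes the ``zero outside $S_{vw}^\mu$'' conclusion fall out of the barycentric interpretation rather than a separate separation argument; the paper's route is shorter but less explicit. One small point you leave implicit, as does the paper, is that the vertices of $S_{vw}^\mu$ are affinely independent, so that the barycentric coordinates you compute are the unique ones; this is immediate from the linear independence of $e_w-e_v$ and the $e_u-e_v$.
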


\begin{proof}

One easily verifies that the vertices $(1-\mu) e_v + \mu e_w$ and $\frac{1}{2}(e_v + e_u)$ for $u \ne v,w$ of $S_{vw}^\mu$, or equivalently $V(S_{vw}^\mu) \setminus \{e_v\}$, are in general position so that they define a unique hyperplane given by $\{x \in \real^N : \, \bar{n}^T x + \bar{d} = 0\}$ with normal $\bar{n} \in \real^N$ and scalar $\bar{d}$.

Geometrically, the hyperplane cuts the simplex $S_{vw}^\mu$ from the larger simplex $S$. Therefore its corner vertex $e_v$ is not contained in the plane, which can easily be checked algebraically. Thus, we have $\bar{n}^T e_v + \bar{d} \ne 0$ and upon rescaling we obtain a new vector $n$ and scalar $d$ so that $n^Tx + d = 0$ defines the same hyperplane and we have $n^T e_v + d = 1$.

By construction, a point $x$ is contained in the convex set $S \setminus S_{vw}^\mu$ if and only if $n^T x + d < 0$. Together with the construction above, this shows all statements of the lemma.

\end{proof}

We need one more function $\bar{\ell}_{vw}$ with the properties
\begin{equation*}
  \begin{aligned}
    \bar{\ell}_{vw}(e_u) & = 0, & & & u & \in V 
    \\
    \bar{\ell}_{vw}\left( e_{vw}^{1/4} \right) & = 1, &
    \bar{\ell}_{vw}\left( e_{vw}^{3/4} \right) & = \frac{1}{2} 
    \\
    \bar{\ell}_{vw}\left( e_{tu}^{1/4} \right) & = 0, &
    \bar{\ell}_{vw}\left( e_{tu}^{3/4} \right) & = 0, & 
    \{t,u\} & \ne \{v,w\} 
    \\
    \bar{\ell}_{vw} \text{ is linear on the corners }& S_u^{1/4}, & & & u & \in V.
  \end{aligned}
\end{equation*}
It can be easily constructed from the functions in this section by 
\begin{equation}
  \bar{\ell}_{vw} = \ell_{vw} + \ell_v^{1/2} - \ell_{vw}^{1/4}.
  \label{eq:lagrange-interior-unsymmetric}
\end{equation}
Since the corner $S_w^{1/4}$ is contained in the unsymmetric corner $S_{wv}^{1/4}$ the function $\bar{\ell}_{vw}$ is linear on the corner $S_w^{1/4}$, as well as all other corners $S_u^{1/4}$, $u \in V$. In addition, all three functions $\ell_{vw}$, $\ell_v^{1/2}$ and $\ell_{vw}^{1/4}$ are lines on the line segments from $e_v$ to the barycenter $\frac{1}{2}(e_v + e_u)$ and from the barycenter to $e_u$ for all $u \ne v,w$. On these three points one directly verifies that $\bar{\ell}_{vw}$ is zero and therefore, it is zero on the entire line segment from $e_v$ to $e_u$, $u \ne w$. For all remaining $\bar{\ell}_{vw}(e_{tu}^{1/4})$ and $\bar{\ell}_{vw}(e_{tu}^{3/4})$, one can easily verify that their value is zero, as well. Finally, on the edge connecting $e_v$ to $e_w$, the function $\bar{\ell}_{vw}$ is piecewise linear as shown in Figure \ref{fig:basis-profile}.

\section{Quantization}
\label{sec:quantization}

In order to input data and labels into the Turing machine and later read the resulting weights $\theta$  from the output tape, we need to quantize and de-quantize floating point numbers to bit streams. From a practical perspective nothing needs to be done because on every computing device, data and weights are already represented by zeros and ones, which can be passed directly to and from the Turing machine.

Nonetheless, for the sake of completeness, we show by a rudimentary construction that this quantization/de-quantization can also be achieved purely by neural networks. To this end, given a number $x \in \real$, we compute a floating point approximation
\[
  x = (-1)^\sigma \left( \sum_{i=0}^m a_i 2^i \right) \cdot 2^{\sum_{j=0}^n b_j 2^j}
\]
with fixed sizes $m$, $n$ of the mantissa and exponent. Thus, we need two neural networks for  mapping $x$ to the binary $\sigma \times (a_0, \dots, a_m) \times (b_1, \dots, b_n)$ and its inversion. The latter can easily be implemented as a neural network with one exponential function as activation, or alternatively with a product of the terms $b_j2^j$. 

Since the operation $x \to \sigma, a,b$ is never differentiated for training the extended network in Figure \ref{fig:nn-extended}, we can use any standard algorithm for that purpose, as e.g. the following:

\begin{algorithmic}[1]
  \State $a=0$
  \State $\sigma = 0$ if $x>0$, else $\sigma = 1$.
  \For{$e \in 2^{\sum_{j=0}^n \hat{b}_j 2^j}$, $\hat{b} \in \{0,1\}^n$}
    \If{$2e > x \ge e$}
      \State $b = \hat{b}$
      \For{$i=m, \dots, 0$}
        \If{$x \ge 2^i e$}
          \State $a_i = 1$
	  \State $y = y - a_i 2^i e$
	\EndIf
      \EndFor
    \EndIf
    \State \textbf{return} $\sigma, a, b$
  \EndFor
\end{algorithmic}
With precomputed weights $2^i e$ with $i = 0, \dots, m$ and $e = 2^{\sum_{j=0}^n \hat{b}_j 2^j}$, $\hat{b} \in \{0,1\}^n$, we only need additions and if statements, which can be easily realized with Heaviside activation functions. The loops are finite and can be unrolled. If we want the network to be continuous, we can replace the Heaviside function with a piecewise linear approximation
\[
  \psi_\epsilon(x) = \left\{ \begin{array}{ll} 
    0 & x \le 0 \\
    \frac{x}{\epsilon} & 0 < x \le \epsilon \\
    1 & 1 \le x
  \end{array} \right.
\]
By choosing $\epsilon$ smaller than the machine accuracy, we can ensure that even with this approximation we obtain unchanged results for all floating point inputs $x$.

\section{Orthogonal Vector}
\label{appendix:orthogonal}

This appendix provides an algorithm for the function $f_\perp(z)$ used in Section \ref{sec:train-TM}. This function returns a vector $y$ that has unit length and is orthogonal to the input $z$. Of course it is easy to find an orthogonal vector, we just have to select one of them. To this end, we use two orthonormal vectors $v$ and $w$. Our first choice is
\begin{equation*}
  \begin{aligned}
    \bar{y} & = \argmin_{y} \frac{1}{2} \|y-v\|_2^2 & & \text{s.t.} & x^T y & = 0,
  \end{aligned}
\end{equation*}
given by $\bar{y} = v - \frac{x^Tv}{x^Tx} x$ and then upon normalization $y = \bar{y} / \|\bar{y}\|$. This procedure produces a non-zero $\bar{y}$ if and only if $v$ is not parallel to $x$. In the latter case (or better with a small safety margin if $x$ and $v$ are almost parallel) we repeat the same calculation with $w$, instead of $v$. 

This can be implemented with $5$ neural network layers: one switch between $v$ and $w$, one for the squared norm $x^Tx$, one for the division $x^Tv / x^Tx$ and likewise two more layers for the normalization of $\bar{y}$.

\bibliographystyle{abbrv}
\bibliography{train-tm}

\end{document}